\documentclass[letterpaper, 10 pt, conference]{ieeeconf}  % Comment this line out
                                                          % if you need a4paper
%\documentclass[a4paper, 10pt, conference]{ieeeconf}      % Use this line for a4
                                                          % paper

\IEEEoverridecommandlockouts                              % This command is only
                                                          % needed if you want to
                                                          % use the \thanks command
\overrideIEEEmargins
\usepackage{amsmath,amssymb,amsfonts}
\usepackage[hyphens]{url}
\usepackage{algorithm}
\usepackage{algpseudocode}
\algnewcommand\algorithmicinput{\textbf{Input:}}
\algnewcommand\Input{\item[\algorithmicinput]}
% Define \Output command in algorithm
\algnewcommand\algorithmicoutput{\textbf{Output:}}
\algnewcommand\Output{\item[\algorithmicoutput]}
\usepackage{graphicx}
\newtheorem{lemma}{Lemma}
\newtheorem{thm}{Theorem}
\newtheorem{definition}
{Definition}

\newtheorem{proposition}
{Proposition}
\newtheorem{remark}{Remark}
\newtheorem{corollary}{Corollary}
\newtheorem{assump}{Assumption}
\usepackage{subfigure}
\usepackage{comment}
\usepackage{cite}
\usepackage{subcaption}
\usepackage{mathtools}
\usepackage[table]{xcolor}
\usepackage{hyperref}

\title{\LARGE \bf
%Control Barrier Function-based Resilient Leader-Follower Formation
%Dynamic Resilient Multi-robot Leader-Follower Formation using Control Barrier Functions
Maintaining Strong $r$-Robustness in Reconfigurable Multi-Robot Networks using Control Barrier Functions
}

\author{Haejoon Lee$^{1}$ and Dimitra Panagou$^{1,2}$
\thanks{*This work was supported by the Air Force Office of Scientific Research (AFOSR) under Award No. FA9550-23-1-0163.}
\thanks{$^{1}$Department of Robotics,
        University of Michigan, Ann Arbor, MI, USA
        {\tt\small haejoonl@umich.edu}}%
\thanks{$^{2}$Department of Aerospace Engineering,
        University of Michigan, Ann Arbor, MI, USA
        {\tt\small dpanagou@umich.edu }}%
\thanks{$^a$https://github.com/joonlee16/Resilient-Leader-Follower-CBF-QP}
}

\begin{document}

\maketitle

%%%%%%%%%%%%%%%%%%%%%%%%%%%%%%%%%%%%%%%%%%%%%%%%%%%%%%%%%%%%%%%%%%%%%%%%%%%%%%%%
\begin{abstract}
    In leader-follower consensus, strong $r$-robustness of the communication graph provides a sufficient condition for followers to achieve consensus in the presence of misbehaving agents. Previous studies have assumed that robots can form and/or switch between predetermined network topologies with known robustness properties. However, robots with distance-based communication models may not be able to achieve these topologies while moving through spatially constrained environments, such as narrow corridors, to complete their objectives. This paper introduces a Control Barrier Function (CBF) that ensures robots maintain strong $r$-robustness of their communication graph above a certain threshold without maintaining any fixed topologies. Our CBF directly addresses robustness, allowing robots to have flexible reconfigurable network structure while navigating to achieve their objectives. The efficacy of our method is tested through various simulation and hardware experiments \href{https://github.com/joonlee16/Resilient-Leader-Follower-CBF-QP}{[code]$^a$}.
\end{abstract}
%%%%%%%%%%%%%%%%%%%%%%%%%%%%%%%%%%%%%%%%%%%%%%%%%%%%%%%%%%%%%%%%%%%%%%%%%%%%%%%%
\vspace{-1mm}
\section{Introduction}
Multi-robot systems are used in tasks such as formation control, search and rescue, and object tracking~\cite{zhu2019,Kaleb_mesch,zhang2021}. One fundamental algorithm in these tasks is consensus, which enables agents to reach agreement on a common state value. Consensus can be categorized into leaderless consensus and leader-follower consensus. Leaderless consensus aims for all agents to agree on the same state value, whereas leader-follower consensus involves a subset of agents, known as followers, converging to the reference state value propagated by the set of other agents, known as leaders~\cite{ time_varying_strongly_usevitch20,ren2010, usevitch2022, dimar2008,rezaee21}. In this paper, we focus on the leader-follower consensus.

Consensus in general suffers performance degradation when misbehaving or compromised agents share incorrect, or even adversarial, information, motivating the study of resilient consensus \cite{LeBlanc13, saldana2017, usevitch2022, abbas2018, li2024, mitra2019, rezaee21,CDC2024}. The \textit{Weighted Mean-Subsequence-Reduced} (W-MSR) algorithm was introduced to allow the non-compromised (often called normal) agents reach consensus despite the presence of compromised agents \cite{LeBlanc13}. Then \cite{strongly_usevitch18} showed
 that resilient leader-follower consensus with misbehaving agents can be achieved using W-MSR under a topological condition called strong $r$-robustness \cite{mitra2019}. A similar property, namely $r$ leader-follower robustness, has been introduced in \cite{rezaee21}, which, however, requires a trustworthy leader. In this paper, we focus on strong $r$-robustness.

One common challenge in the literature of resilient leader-follower consensus is the assumption that the agents can preserve and/or switch between predetermined graph topologies with known robustness properties. However, these topologies may not be achievable as robots with distance-based communication models navigate spatially constrained environments. The problem of forming network resilience for leaderless consensus without fixed topologies has been studied in \cite{cavorsi2022,saulnier2017,guerrero2020}. However, these focus on $r$-robustness \cite{LeBlanc13}, which is not sufficient for resilient leader-follower consensus and does not directly extend to strong $r$-robustness \cite{strongly_usevitch18, rezaee21, pirani2023}. Our paper employs Control Barrier Functions to ensure that robots maintain strong $r$-robustness above some threshold.

% {\color{blue}thereby guaranteeing resilient leader-follower-consensus without fixed topologies}.

Control Barrier Functions (CBFs) have become a popular technique for enforcing safety while considering desired objectives \cite{ames_cbf,ames_cbf_qp,garg2024, visibility_tk,dev_observer_cbf}. High-order CBFs (HOCBFs) \cite{xiao2022} handle constraints with higher relative degrees with respect to system dynamics, and learning-based CBFs have been studied in~\cite{learning_Cbf_tk,dawson_cbf,xiao2023}. While CBFs have been applied to form resilient multi-robot networks for leaderless consensus in \cite{cavorsi2022,guerrero2020}, these methods all focus on $r$-robustness and thus may not extend to resilient leader-follower consensus. In \cite{cavorsi2022}, $r$-robustness of a multi-robot network was controlled indirectly by controlling its algebraic connectivity, which could result in overly conservative formations with unnecessary edges. Authors in \cite{guerrero2020} focus on forming a specific class of $r$-robust graph. In contrast, our proposed CBF not only directly addresses strong $r$-robustness but also assumes no fixed class of graphs, offering a more flexible and general approach.

\emph{Contributions:} 
We present a CBF that maintains strong $r$-robustness of a multi-robot network to ensure resilient leader-follower consensus. We first construct HOCBFs that altogether encode strong $r$-robustness without maintaining predetermined topologies. These HOCBFs are then composed into a single valid CBF. Finally, we demonstrate the efficacy of our work with numerical and hardware experiments. 

\emph{Organization}: 
Section~\ref{sec:notation} presents the utilized notation. Section~\ref{sec:preliminaries} provides the preliminaries and problem statement. HOCBFs for robustness maintenance are constructed in Section~\ref{sec:main_hocbf}, and the composed CBF is developed in Section~\ref{sec:composition}. The experiment results are presented in Section~\ref{sec:experiment}, while conclusion is stated in Section~\ref{sec:conclusion}. 

\section{Notation}
\label{sec:notation}
Let $\mathcal G(t)=(\mathcal V, \mathcal E(t))$ be a connected, undirected graph with a vertex set $\mathcal V=\{1,\cdots, n\}$ and time-varying edge set $\mathcal E(t) \subseteq \mathcal V\times \mathcal V$. Since $\mathcal G(t)$ is undirected, an edge $(i,j) \in \mathcal E(t)$ implies $(j,i) \in \mathcal E(t)$ at the same $t$. A neighbor set of a node $i \in \mathcal V$ at time $t$ is denoted as $\mathcal N_i(t)=\{j|(i,j) \in \mathcal E(t)\}$. For simplicity, we omit $t$ when the context is clear.

The set of integers $\{1,\cdots,c\}$ is denoted as $[c]$. We denote the cardinality of a set $\mathcal S$ as $|\mathcal S|$. The set of non-negative integers, positive integers, non-negative reals, and positive reals are denoted as $\mathbb Z_{\geq 0}$, $\mathbb Z_{+}$, $\mathbb R_{\geq 0}$, and $\mathbb R_+$ respectively. Let $\|\cdot\|:\mathbb R^m\to \mathbb R_{\geq 0}$ be the Euclidean norm. A $m \times 1$ column vector of $1$ and $0$ are denoted as $\mathbf 1_m$ and $\mathbf 0_m$ respectively. Similarly, a $m \times n$ matrix of $0$ are denoted as $\mathbf 0_{m\times n}$, and an identity matrix of size $m$ is denoted as $\mathbf I_m$. An $i^{\text{th}}$ element of $\mathbf y\in \mathbb R^n$ is denoted as $y_i$. For $\mathbf y, \mathbf z \in \mathbb R^n$, $\mathbf y \geq \mathbf z$ or $\mathbf y>\mathbf z$ means $y_i \geq z_i$ or $y_i>z_i$ $\forall i\in [n]$. We define the Heaviside step function $H:\mathbb R \to \mathbb R$ and parametrized sigmoid function $\sigma_{s,q}:\mathbb R \to \mathbb R$ with $s \in \mathbb R_+$ and $q \in (0,1)$, respectively:

\begin{equation}
H(y) = \begin{cases}
    1 & \text{if } y \geq 0 \\
    0 & \text{otherwise}
\end{cases},
\label{eq:heaviside}
\end{equation}
\begin{equation}
\sigma_{s,q}(y) = \frac {1+q} {1+q^{-1}e^{-sy}} -q.
\label{eq:sigmoid}
\end{equation}
Note \eqref{eq:sigmoid} is designed such that i) $\sigma_{s,q}(y)=0$ when $y=0$, ii) $\sigma_{s,q}(y)>0$ when $y>0$, and iii) $\sigma_{s,q}(y)<0$ when $y<0$. Then, we define $H^n:\mathbb R^n \to \mathbb R^n$ and $\sigma^n_{ s,q}:\mathbb R^n \to \mathbb R^n$ 
% as follows:
% \begin{equation}
%     H^n(\mathbf y) = \begin{bmatrix}
%         H(y_1)& \cdots & H(y_n)
%     \end{bmatrix}^T
% \end{equation}
% \begin{equation}
%     \sigma^n_{s,q}(\mathbf y) = \begin{bmatrix}
%         \sigma_{s,q}(y_1) &\cdots&
%         \sigma_{s, q}(y_n)
%     \end{bmatrix}^T
% \end{equation}
as element-wise operations of $H$ and $\sigma_{s,q}$ respectively.
\section{Preliminaries and Problem Statement}
\label{sec:preliminaries}
\subsection{System Dynamics}

We consider a multi-robot system, in which robot $i$ has a physical state represented as $x_i(t) =\begin{bmatrix}
    p_i(t)^T & v_i(t)^T
\end{bmatrix}^T \in \mathbb R^{2m}$ with position $p_i(t)=[p_{i1}\cdots p_{im}]^T\in \mathbb R^m$ and velocity $v_i(t)=[v_{i1}\cdots v_{im}]^T \in \mathbb R^m$. An agent $i$ has double integrator dynamics of the form:
\begin{equation}
\label{eq:dynamics}
     \dot {x}_i(t)= \mathbf A_i x_i(t)+ \mathbf B_i u_i(t),
\end{equation}
where $\mathbf A_i=\begin{bmatrix}
     \mathbf 0_{m\times m} &  \mathbf I_{m} \\
      \mathbf 0_{m\times m} &  \mathbf 0_{m\times m}
     \end{bmatrix}$, $\mathbf B_i = \begin{bmatrix}
     \mathbf 0_{m\times m} \\  \mathbf I_{m} \\
     \end{bmatrix}$, $u_i(t) \in U_i \subseteq\mathbb R^m$ is the control input of a robot $i$, with $U_i$ being the input constraint set. Let $M=nm$. We denote collective states and control inputs of $n$ robots as $\mathbf x(t)=[x_1^T(t) \cdots x_n^T(t)]^T \in \mathbb R^{2M}$ and $\mathbf u(t)=[u_1^T(t) \cdots u_n^T(t)]^T \in U=U_1\times \cdots \times U_n \subseteq \mathbb R^{M}$. Also, let $\mathbf p(t)= \begin{bmatrix}
       p_1^T(t),\cdots, p_n^T(t) 
\end{bmatrix}^T\in \mathbb R^M$ and $\mathbf v(t)= \begin{bmatrix}
       v_1^T(t),\cdots, v_n^T(t) 
\end{bmatrix}^T \in \mathbb R^M$. We drop the argument $t$ when context is clear. Thus the full system dynamics is
\begin{equation}
    \dot {\mathbf x} = \mathbf A\mathbf x + \mathbf B \mathbf u, 
\label{eq:combined_dynamics}
\end{equation}
where $\mathbf A=\text{diag}([\mathbf A_1\cdots\mathbf A_n])$ and $\mathbf B=\text{diag}([\mathbf B_1\cdots\mathbf B_n])$.

 We now define the robots' communication graph $\mathcal G(t)=(\mathcal V, \mathcal E(t))$ with $\mathbf x(t)$ at each $t$, where $\mathcal V=[n]$ represents the robots and $\mathcal E(t)$ represents links between two robots. Let $\Delta_{ij}(\mathbf x(t))=||p_i(t)-p_j(t)||$ be the distance between robots $i$ and $j$ at time $t$. We consider a distance-based model
\begin{equation}
    \mathcal E(t) =\{(i,j) \mid \Delta_{ij}(\mathbf x(t)) < R\},
    \label{eq:edgeset}
\end{equation}
where $R>0$ is the communication range of robots. The adjacency matrix $A(\mathbf x(t))$ of $\mathcal G(t)$ has an entry defined as
\begin{equation}
\label{eq:adjacency}
    a_{ij}(\mathbf x(t)) = \begin{cases}
        1 & \text{if } \Delta_{ij}(\mathbf x(t)) <R \\
        0 &\text{otherwise}
    \end{cases}.
\end{equation}

\subsection{Fundamentals of Strong $r$-robustness and W-MSR}
Consider a connected network $\mathcal G(t)=(\mathcal V, \mathcal E(t))$ where $\mathcal V$ is partitioned into two static subsets: leaders $\mathcal L\subset \mathcal V$ that propagate the same reference value $f_l\in \mathbb R$ to followers $\mathcal F = \mathcal V \setminus \mathcal L$. We denote $|\mathcal L|=l$ and $|\mathcal F|=f$. At discrete times $t_N = N\tau $ for $N\in \mathbb Z_{\geq0}$ where $\tau\in \mathbb R_+$ is the update interval, agent $i$ shares its consensus value $y_i(t_N) \in \mathbb R$ with its neighbors. Robots have $y(t)=y(t_N)$ $\forall t \in [t_N, t_{N+1})$, and update with
\begin{equation}
y_i(t_{N+1})=
    \begin{cases}
        f_l, & i\in \mathcal L, \\
     \sum_{j\in \mathcal N_i\cup\{i\}} w_{ij}(t_N)y_j(t_N), & i \in \mathcal F,
    \end{cases}
    \label{eq:linear}
\end{equation}
where $w_{ij}(t_N)$ is the weight assigned to $y_j(t_N)$ by agent $i$. We assume $\exists \alpha \in(0,1)$ such that $\forall i \in \mathcal V$ $\forall N \in \mathbb Z_{\geq 0}$,
\begin{itemize}
    \item $w_{ij}(t_N)\geq \alpha$ if $j\in \mathcal N_i\cup \{i\}$, or $w_{ij}(t_N)=0$ otherwise,
    \item $\sum_{j=1}^n w_{ij}(t_N)=1$.
\end{itemize}
Since $\mathcal G$ is connected, leader-follower consensus is guaranteed through \eqref{eq:linear} \cite{ren2007}. However, this guarantee no longer holds with misbehaving agents \cite{strongly_usevitch18}. Among various models for misbehaving agents, we adopt one below for this paper:
\begin{definition}[\textbf{malicious agent}]
    \label{misbeh}
    An agent $i\in \mathcal V$ is \textbf{malicious} if it sends $y_i(t_N)$ to $\forall j\in \mathcal N_i(t_N)$ at time $t_N$ but does not update $y_i(t_{N+1})$ according to \eqref{eq:linear} at some $t_{N+1}$ \cite{LeBlanc13}.
\end{definition}

Normal agents are those who are not malicious. Various threat scopes exist to describe the number of malicious agents in a network \cite{LeBlanc13}, but for this paper we define one below:
\begin{definition}[\textbf{$\mathbf F$-local}]
    A set $\mathcal S \subset \mathcal V$ is $\mathbf F$-\textbf{local} if it
contains at most $F$ nodes as neighbors of other
nodes, i.e., $|\mathcal N_i \cap \mathcal S|\leq F$, $\forall i\in \mathcal V\setminus \mathcal S$  \cite{LeBlanc13}.
\end{definition}

In \cite{LeBlanc13}, W-MSR was introduced to guarantee leaderless consensus with up to $F$-local malicious agents by having each normal agent $i$ apply the nominal update protocol after discarding up to the $F$ highest and lowest values strictly greater and smaller than its own value $y_i(t)$ at $t$. Then,\cite{strongly_usevitch18} showed that leader-follower consensus can be achieved through W-MSR, under a topological property defined below:
\begin{definition}[$\mathbf r$-\textbf{reachable} \cite{LeBlanc13}]
    \label{reachability}
    Let $\mathcal G(t) = (\mathcal V,\mathcal{E}(t))$ be a graph at time $t$ and $\mathcal S$ be a nonempty subset of $\mathcal V$. The subset $\mathcal S$ is $\mathbf r$-\textbf{reachable} at $t$ if $\exists i\in \mathcal S$ such that $|\mathcal N_i(t) \backslash \mathcal S|\geq r$.
\end{definition}

\begin{definition}[\textbf{strongly} $\mathbf r$-\textbf{robust} \cite{mitra2019}]
    \label{robustness}
    A graph $\mathcal G(t) = (\mathcal V,\mathcal{E}(t))$ is \textbf{strongly} $\mathbf r$-\textbf{robust} with respect to $\mathcal S_1 \subset \mathcal V$ at time $t$ if $\forall \mathcal S_2 \subset \mathcal V \setminus \mathcal S_1$ such that $\mathcal S_2 \neq \emptyset$, $\mathcal S_2$ is $r$-reachable.
\end{definition}

If a network is strongly $(2F+1)$-robust $\forall t \in \mathbb R_{\geq 0}$, normal followers are guaranteed to reach consensus to $f_l$ through W-MSR with up to $F$-local malicious agents \cite{time_varying_strongly_usevitch20}.

\subsection{Problem Statement}
Let $\mathcal G(t_0)=(\mathcal V, \mathcal E(t_0))$ be a communication graph formed by a system of $n$ robots described by \eqref{eq:combined_dynamics} with state $\mathbf x(t_0)$ at the initial time $t_0 \in \mathbb R_{\geq 0}$. Let $\mathcal G(t_0)$ be at least strongly $r$-robust where $1\leq r\leq l-1$, and consider the desired control input $\mathbf u_{\text{des}}=[u_{1,\text{des}}^T\cdots u_{n,\text{des}}^T]^T$. We aim to design a control strategy so that the network remains at least strongly $r$-robust while minimally deviating from $\mathbf u_{\text{des}}(t)$ $\forall t\geq t_0$.

To solve this, we use High Order Control Barrier Functions (HOCBFs) \cite{xiao2022} and Bootstrap Percolation (BP) \cite{jason2012}. Before presenting the solution in Section~\ref{sec:composition}, we first describe these concepts in the remainder of the section.

\subsection{High-Order Control Barrier Function}
We first present High Order Control Barrier Functions (HOCBFs), which are introduced in \cite{xiao2022}. Note in this paper, we are considering time-invariant HOCBFs. Let $d^{\text{th}}$ order differentiable function $h:\mathcal D \subset \mathbb R^{2M} \to \mathbb R$ have a relative degree $d$ with respect to  system \eqref{eq:combined_dynamics}. We define $\psi_0\coloneq h(\mathbf x)$ and a series of functions $\psi_i:\mathcal D \to \mathbb R$, $i\in [d]$, as
\begin{equation}
    \psi_i(\mathbf x)\coloneq \dot \psi_{i-1}(\mathbf x)+\alpha_i(\psi_{i-1}(\mathbf x)), \quad i\in[d],
\label{eq:hocbf}
\end{equation}
where $\alpha_i(\cdot)$ are class $\mathcal K$ functions, and $\dot \psi_i(\mathbf x)=\frac {\partial \psi_i}{\partial \mathbf x}(\mathbf A\mathbf x+ \mathbf B \mathbf u )$. Each function $\psi_i$ defines a set $\mathcal C_i$ as follows:
\begin{equation}
        \mathcal C_i\coloneq\{\mathbf x\in \mathbb R^{2M} \mid \psi_{i-1}(\mathbf x)\geq 0\}, \quad i\in [d].
    \label{eq:hocbf_sets}
\end{equation}
Let $\mathcal C = \cap_{i=1}^{d}\mathcal C_i$. Now, we formally define HOCBF:
\begin{definition}
[\textbf{HOCBF} \cite{xiao2022}]
    Let $\psi_i(\mathbf x)$, $i\in\{0,\cdots,d\}$, be defined by \eqref{eq:hocbf} and $\mathcal C_i$, $i\in[d]$, be defined by \eqref{eq:hocbf_sets}. Then, a function $h:\mathcal D \to \mathbb R$ is a \textbf {HOCBF} of relative degree $d$ for system \eqref{eq:combined_dynamics} if there exist differentiable class $\mathcal K$ functions $\alpha_1,\cdots,\alpha_{d}$ such that
    \begin{equation}
        \sup\limits_{\mathbf u \in U}  \big[\psi_{d}(\mathbf x)\big]\geq 0,\quad\forall \mathbf x \in \mathcal C.
    \end{equation}
\end{definition}
\begin{thm}
    Given the HOCBF $h$ and its safety set $\mathcal C$, if $x(t_0)\in \mathcal C$, any Lipschitz continuous controller $\mathbf u(\mathbf x) \in K_{\text{hocbf}}(\mathbf x)=\{\mathbf u\in U\mid \psi_{d}(\mathbf x)\geq0\}$ renders $\mathcal C$ forward invariant for system \eqref{eq:combined_dynamics} \cite{xiao2022}.
    \label{thm:hocbf}
\end{thm}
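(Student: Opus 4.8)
The plan is to prove forward invariance by a nested comparison argument that peels off the functions $\psi_i$ one index at a time, from $i=d$ down to $i=1$. The single ingredient that makes every step work is that each $\alpha_i$ is class $\mathcal K$, so $\alpha_i(0)=0$; this turns the origin into an equilibrium of the scalar comparison dynamics and is what prevents any $\psi_{i-1}$ from crossing from nonnegative to negative.

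First I would settle well-posedness. Since $\mathbf u(\mathbf x)$ is Lipschitz continuous and the dynamics \eqref{eq:combined_dynamics} are affine, the closed-loop vector field $\mathbf A\mathbf x+\mathbf B\mathbf u(\mathbf x)$ is locally Lipschitz, so from $\mathbf x(t_0)\in\mathcal C$ there is a unique maximal solution $\mathbf x(t)$ on some interval $[t_0,T)$; because $h$ is $d$-times differentiable and the $\alpha_i$ are differentiable, each map $t\mapsto\psi_i(\mathbf x(t))$ is differentiable along this solution. Next I would record the defining implication of the controller: on the set where $K_{\text{hocbf}}$ is nonempty, which by the HOCBF condition is exactly $\mathcal C$, any $\mathbf u(\mathbf x)\in K_{\text{hocbf}}(\mathbf x)$ yields $\psi_d(\mathbf x(t))\ge 0$, i.e. $\dot\psi_{d-1}\ge -\alpha_d(\psi_{d-1})$.

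The core is a scalar comparison lemma: if $z(\cdot)$ is differentiable with $\dot z\ge -\alpha(z)$ for a class $\mathcal K$ function $\alpha$ and $z(t_0)\ge0$, then $z(t)\ge0$ thereafter. I would prove this by comparing $z$ with the solution $y$ of $\dot y=-\alpha(y)$, $y(t_0)=z(t_0)$: uniqueness together with $\alpha(0)=0$ keeps $y$ nonnegative, and the comparison principle gives $z\ge y\ge0$. Applying this with $(z,\alpha)=(\psi_{d-1},\alpha_d)$ gives $\psi_{d-1}(t)\ge0$; since $\psi_{d-1}\ge0$ is exactly $\dot\psi_{d-2}\ge-\alpha_{d-1}(\psi_{d-2})$, I would apply the lemma again with $(z,\alpha)=(\psi_{d-2},\alpha_{d-1})$, and so on down to $(z,\alpha)=(\psi_0,\alpha_1)$. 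This chain certifies $\psi_{i-1}(t)\ge0$ for every $i\in[d]$, i.e. $\mathbf x(t)\in\mathcal C_i$ for all $i$ and hence $\mathbf x(t)\in\mathcal C=\cap_{i=1}^d\mathcal C_i$.

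The step I expect to be the main obstacle is closing the evident circularity: the inequality $\psi_d\ge0$ is only guaranteed while $\mathbf x(t)$ lies in $\mathcal C$ (where the controller set is nonempty), yet that inequality is precisely what I use to conclude the trajectory stays in $\mathcal C$. I would resolve this with a first-exit/continuity argument: let $t^\ast=\sup\{t\in[t_0,T):\mathbf x(s)\in\mathcal C\ \forall s\in[t_0,t]\}$, run the nested comparison argument on $[t_0,t^\ast)$ where $\psi_d\ge0$ legitimately holds, and use continuity of each $\psi_{i-1}(\mathbf x(\cdot))$ to pass the nonnegativities to the limit $t^\ast$, giving $\mathbf x(t^\ast)\in\mathcal C$. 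Forward invariance of $\mathcal C$ on the whole maximal interval then follows, since otherwise the solution could be continued past $t^\ast$ while remaining in $\mathcal C$, contradicting the definition of $t^\ast$.
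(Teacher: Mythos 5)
The paper does not prove this statement: Theorem~\ref{thm:hocbf} is imported verbatim from the cited reference \cite{xiao2022}, so there is no in-paper proof to compare against. Your nested comparison argument --- peel off $\psi_d \ge 0 \Rightarrow \dot\psi_{d-1} \ge -\alpha_d(\psi_{d-1}) \Rightarrow \psi_{d-1}\ge 0$, then recurse down to $\psi_0 = h$ --- is exactly the proof given in that reference (which in turn leans on the standard scalar lemma of Glotfelter/Ames: $\dot z \ge -\alpha(z)$, $z(t_0)\ge 0$, $\alpha$ class $\mathcal K$ implies $z(t)\ge 0$). Your version is if anything more careful, since you make the well-posedness of the closed loop and the comparison ODE explicit.

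The one step that does not close as written is the very last one. Having shown $\mathbf x(t^\ast)\in\mathcal C$ by continuity, you assert that the solution ``could be continued past $t^\ast$ while remaining in $\mathcal C$,'' contradicting the definition of $t^\ast$. But that continuation claim is itself a local forward-invariance statement at $\mathbf x(t^\ast)$ (which may sit on the boundary of $\mathcal C$), and the only tool you have for proving it is the same comparison argument, which again needs $\psi_d\ge 0$ on a right-neighborhood of $t^\ast$ --- precisely what is in question. So the circularity you identified is deferred, not resolved. The clean fix is to drop the exit-time machinery entirely: read the hypothesis $\mathbf u(\mathbf x)\in K_{\text{hocbf}}(\mathbf x)$ as a pointwise selection condition holding wherever the closed-loop trajectory is evaluated (this is the reading in \cite{xiao2022}), so that $\psi_d(\mathbf x(t))\ge 0$ holds on all of $[t_0,T)$ unconditionally and the nested comparison applies directly. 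Alternatively, if you insist the constraint is only certified on $\mathcal C$, you need a genuinely different local argument at the boundary (Nagumo-type), not a restatement of the conclusion.
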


HOCBF is a generalization of Control Barrier Function (CBF) with $d=1$ \cite{ames_cbf, ames_cbf_qp, xiao2022}. In this paper, we present a CBF that enforces multi-robot system to form strongly $r$-robust network, where $r\in\mathbb Z_+$ is a user-given parameter.

\subsection{Bootstrap Percolation}
Now, we introduce bootstrap percolation (BP) \cite{jason2012}. Let $\mathcal G(t)=(\mathcal V, \mathcal E(t))$ be a communication graph formed by $n$ robots with physical states $\mathbf x(t)$ at time $t$. Given user-defined threshold $r\in \mathbb Z_+$ and initial set of active nodes $\mathcal L=[l] \subset \mathcal V$, BP models the spread of \textit{activation} of nodes of $\mathcal G$ from $\mathcal L$. In BP, each node is either \textit{active} (activation state $1$) or \textit{inactive} (activation state $0$). The process iteratively activates nodes with at least $r$ active neighbors until no further activations occur, terminating in at most $|\mathcal F|=f$ iterations. Once activated, a node stays active until termination. If all nodes in $\mathcal V$ become active through BP with threshold $r$ and initial active set $\mathcal L$, then we say $\mathcal L$ percolates $\mathcal G$ with threshold $r$.

We now mathematically define the process of BP up to a given number of iterations $\delta$ where $1\leq\delta\leq f$. Let $\pi^r_j(k)$ denote the activation state of node $j$ at iteration $k \in \{0,\cdots, \delta\}$ of BP, which is updated using the rule below
\begin{equation}
\pi^r_j(k+1)=
\begin{cases}
   1 & \text{if $\pi^r_j(k)=1$}\\
   1 & \text{if $\sum_{i \in \mathcal N_j} \pi^r_i(k) \geq r$}\\
   0 & \text{if $\sum_{i \in \mathcal N_j} \pi^r_i(k) < r$}
\end{cases}.
\label{eq:percolation}
\end{equation}
We now define an activation state vector of all nodes at iteration $k$ as $\pi^r(k)=[\pi^r_1(k)\cdots \pi^r_{n}(k)]^T$. This can be decomposed into
    $\pi^r(k) = \begin{bmatrix}
    \pi^r_\mathcal L(k)^T & \pi^r_\mathcal F(k)^T
\end{bmatrix}^T$,
where $\pi^r_\mathcal L(k) = \begin{bmatrix}
    \pi^r_{1}(k) \cdots \pi^r_l(k)
\end{bmatrix}^T$ and $\pi^r_\mathcal F(k) = \begin{bmatrix}
    \pi^r_{l+1}(k) \cdots \pi^r_n(k)
\end{bmatrix}^T$ are activation state vectors of leaders and followers at iteration $k$, respectively. Then, $\pi^r_\mathcal L(0)=\mathbf 1_l$ and $\pi^r_\mathcal F(0)=\mathbf 0_f$. Note that $\pi^r(\delta)= \mathbf 1_{n}$ implies $\mathcal L$ percolates $\mathcal G$ with threshold $r$. Now, we present a lemma from \cite{mitra2019}:
\begin{lemma}
Given a graph $\mathcal G$ and threshold $r\in \mathbb Z_+$, an initial set $\mathcal L$ percolates $\mathcal G$ with threshold $r$ if and only if $\mathcal G$ is strongly $r$-robust with respect to $\mathcal L$ \cite{mitra2019}.
\label{lem:first}
\end{lemma}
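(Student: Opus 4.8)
The plan is to prove the biconditional in Lemma~\ref{lem:first} by establishing each implication separately, tracking throughout the set of active nodes $\mathcal A(k)\coloneq\{j\in\mathcal V\mid \pi^r_j(k)=1\}$ produced by the BP rule \eqref{eq:percolation}. The central observation tying the two notions together is that, for any node $j$, its active-neighbor count $\sum_{i\in\mathcal N_j}\pi^r_i(k)$ equals $|\mathcal N_j\cap\mathcal A(k)|$, and once the active set stabilizes the residual inactive set plays exactly the role of the candidate subset $\mathcal S_2$ in Definition~\ref{robustness}.

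For the direction ``strong $r$-robustness $\Rightarrow$ percolation,'' I would argue by contradiction. Since BP terminates in at most $f$ iterations with some final active set $\mathcal A^\star\supseteq\mathcal L$ (leaders start active by $\pi^r_\mathcal L(0)=\mathbf 1_l$ and activation is monotone), suppose the inactive set $\mathcal S_2\coloneq\mathcal V\setminus\mathcal A^\star$ is nonempty. Because $\mathcal L\subseteq\mathcal A^\star$, we have $\emptyset\neq\mathcal S_2\subseteq\mathcal V\setminus\mathcal L$, so strong $r$-robustness with respect to $\mathcal L$ forces $\mathcal S_2$ to be $r$-reachable: there is a node $i\in\mathcal S_2$ with $|\mathcal N_i\setminus\mathcal S_2|\geq r$. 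But $\mathcal N_i\setminus\mathcal S_2=\mathcal N_i\cap\mathcal A^\star$ is precisely $i$'s set of active neighbors, so $i$ has at least $r$ active neighbors and would activate at the next iteration by \eqref{eq:percolation}, contradicting that $\mathcal A^\star$ is final. Hence $\mathcal S_2=\emptyset$, i.e. $\pi^r(\delta)=\mathbf 1_n$ for some $\delta\leq f$, which is percolation.

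For the converse I would prove the contrapositive: if $\mathcal G$ is not strongly $r$-robust with respect to $\mathcal L$, then $\mathcal L$ does not percolate. Failure of strong $r$-robustness yields a nonempty $\mathcal S_2\subseteq\mathcal V\setminus\mathcal L$ that is not $r$-reachable, i.e. $|\mathcal N_j\setminus\mathcal S_2|<r$ for every $j\in\mathcal S_2$. I then show by induction on $k$ that $\pi^r_j(k)=0$ for all $j\in\mathcal S_2$. The base case holds since $\mathcal S_2\cap\mathcal L=\emptyset$ gives $\pi^r_\mathcal F(0)=\mathbf 0_f$. For the inductive step, the hypothesis makes every active neighbor of a node $j\in\mathcal S_2$ lie outside $\mathcal S_2$, so its active-neighbor count is at most $|\mathcal N_j\setminus\mathcal S_2|<r$; by \eqref{eq:percolation} it remains inactive. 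Thus the nodes of $\mathcal S_2$ never activate and $\pi^r(\delta)\neq\mathbf 1_n$ for every $\delta$, so $\mathcal L$ fails to percolate.

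The only delicate point is the inductive step of the converse: one must invoke the induction hypothesis to guarantee that the active neighbors counted for a node $j\in\mathcal S_2$ all lie in $\mathcal V\setminus\mathcal S_2$, which is precisely what lets the non-$r$-reachability bound $|\mathcal N_j\setminus\mathcal S_2|<r$ apply to the count $\sum_{i\in\mathcal N_j}\pi^r_i(k)$. Everything else reduces to rewriting that sum as $|\mathcal N_j\cap\mathcal A(k)|$ and matching the inactive set against Definition~\ref{robustness}.
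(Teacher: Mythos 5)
Your proof is correct. Note that the paper itself does not prove Lemma~\ref{lem:first} at all: it imports the equivalence directly from \cite{mitra2019} as a known result, so there is no in-paper argument to compare against. Your two-directional argument is the standard one for this equivalence: identifying the stabilized inactive set $\mathcal V\setminus\mathcal A^\star$ with a candidate $\mathcal S_2$ in Definition~\ref{robustness} and using $\mathcal N_i\setminus\mathcal S_2=\mathcal N_i\cap\mathcal A^\star$ for the forward direction, and an induction showing a non-$r$-reachable set is an invariant ``blocking set'' for the converse. You also correctly flag the one place where care is needed (the induction hypothesis is what licenses bounding the active-neighbor count by $|\mathcal N_j\setminus\mathcal S_2|$). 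The only cosmetic remark is that Definition~\ref{robustness} writes $\mathcal S_2\subset\mathcal V\setminus\mathcal S_1$; your forward direction needs the case $\mathcal S_2=\mathcal V\setminus\mathcal L$ (no follower ever activates) to be admissible, which it is under the non-strict reading of $\subset$ that this literature intends.
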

Using Lemma~\ref{lem:first}, we derive the following corollary:
\begin{corollary}
    Given a graph $\mathcal G$, threshold $r\in \mathbb Z_+$, and initial set $\mathcal L$, the process of BP activates every follower $i \in \mathcal F$ if and only if $\mathcal G$ is strongly $r$-robust with respect to $\mathcal L$. 
    \label{col:follower_only}
\end{corollary}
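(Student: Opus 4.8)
The plan is to leverage Lemma~\ref{lem:first} directly, since the corollary is essentially a restatement of that lemma with the observation that the leaders are already active at iteration $0$ and never change. The key insight is that percolation of $\mathcal{G}$ by $\mathcal{L}$ means \emph{all} nodes become active, but the leaders start active by construction ($\pi^r_{\mathcal{L}}(0) = \mathbf{1}_l$) and remain active (by the first case of \eqref{eq:percolation}). Therefore the only nontrivial content of ``$\mathcal{L}$ percolates $\mathcal{G}$'' is precisely that every follower becomes active. I would prove the biconditional by establishing the logical equivalence between the statements ``BP activates every follower'' and ``$\mathcal{L}$ percolates $\mathcal{G}$,'' then chain this with Lemma~\ref{lem:first}.

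First I would prove the forward direction of the auxiliary equivalence: if BP activates every follower $i \in \mathcal{F}$, then $\pi^r_{\mathcal{F}}(\delta) = \mathbf{1}_f$ for some termination iteration $\delta$. Combined with $\pi^r_{\mathcal{L}}(0) = \mathbf{1}_l$ and the monotonicity of activation (once active, a node stays active, from the first case of \eqref{eq:percolation}), we get $\pi^r(\delta) = \mathbf{1}_n$, which is exactly the definition of $\mathcal{L}$ percolating $\mathcal{G}$ with threshold $r$. For the reverse direction, if $\mathcal{L}$ percolates $\mathcal{G}$, then $\pi^r(\delta) = \mathbf{1}_n$, so in particular $\pi^r_i(\delta) = 1$ for every follower $i \in \mathcal{F}$, meaning every follower is activated. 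This establishes:
\begin{equation}
\text{BP activates every } i \in \mathcal{F} \iff \mathcal{L} \text{ percolates } \mathcal{G} \text{ with threshold } r. \nonumber
\end{equation}

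Finally I would invoke Lemma~\ref{lem:first}, which states $\mathcal{L}$ percolates $\mathcal{G}$ with threshold $r$ if and only if $\mathcal{G}$ is strongly $r$-robust with respect to $\mathcal{L}$. Transitivity of the two biconditionals yields the desired claim: BP activates every follower $i \in \mathcal{F}$ if and only if $\mathcal{G}$ is strongly $r$-robust with respect to $\mathcal{L}$.

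I expect this proof to be essentially immediate rather than technically difficult; the corollary is a convenient reformulation of Lemma~\ref{lem:first} tailored to the follower set, which is the object the subsequent HOCBF construction will actually monitor. The only point requiring minor care is the monotonicity argument ensuring that leaders remaining active does not interfere with the follower-activation characterization, i.e., confirming that ``every follower active'' together with the fixed leader activations genuinely coincides with ``all nodes active.'' This is a one-line observation, so the main obstacle is simply stating the equivalence cleanly rather than any substantive mathematical hurdle.
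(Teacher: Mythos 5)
Your proposal is correct and matches the paper's intent exactly: the paper states this corollary without an explicit proof, treating it as an immediate consequence of Lemma~\ref{lem:first} together with the fact that $\pi^r_{\mathcal L}(0)=\mathbf 1_l$ and activation is monotone, which is precisely the one-line equivalence you spell out. No gaps.
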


With Corollary~\ref{col:follower_only}, $\pi^r_\mathcal F(\delta)=\mathbf 1_f$ directly implies that $\mathcal G(t)$ is strongly $r$-robust at time $t$. We use this intuition to build HOCBFs in the next section.

% Let $\mathcal G(t_0)=(\mathcal V, \mathcal E(t_0))$ be a communication graph formed by a system of $n$ robots with state $\mathbf x(t_0)$ at initial time $t_0 \in \mathbb R_{\geq 0}$ and dynamics \eqref{eq:combined_dynamics}. 

% \begin{prob}
%     Given $\mathbf u_{\text{des}}$ and $\mathcal G(t_0)$ that is strongly $r_0$-robust with respect to $\mathcal L$, we aim to design {\color{red}a multi-robot system that remains} strongly {\color{red}$\min(r_0, |\mathcal L|-1)$}-robust while minimally deviating from $\mathbf u_{\text{des}}(t)$ $\forall t\geq t_0$.
%     \label{prob:main}
% \end{prob}

\section{HOCBFs for Strong $r$-Robustness}
\label{sec:main_hocbf}

Here, we construct HOCBFs that create sufficient connections for a communication graph $\mathcal G(t)=(\mathcal V, \mathcal E(t))$ to be strongly $r$-robust with respect to its leader set $\mathcal L=[l]\subset \mathcal V$. Let $\pi^r(k)$ be an activation state vector of all nodes at iteration $k\in\{0,\cdots, \delta\}$ of BP where $1\leq \delta\leq f$. Let $A(\mathbf x)$ be an adjacency matrix of $\mathcal G$ at time $t$. Using the reasoning of BP, we recursively represent $\pi^r_\mathcal F(k)$ for $k\in[\delta]$ with $\pi^r_\mathcal F(0)=\mathbf 0_{f}$:
\begin{equation}
   \pi^r_\mathcal F(k)
   = H^f\Big(\underbrace{\begin{bmatrix}
       \mathbf 0_{f\times l} & \mathbf I_f
   \end{bmatrix}A(\mathbf x) \begin{bmatrix}
       \mathbf 1_{l} \\
       \pi^r_\mathcal F(k-1) 
   \end{bmatrix}}_K- r\mathbf 1_f\Big)
    \label{eq:original}
\end{equation}
where $i^{\text{th}}$ element of $K\in \mathbb R^f$ indicates how many active neighbors a follower $i$ has at iteration $k$.

\subsection{Continuous Representation of the Activation States}

However, $\pi^r_\mathcal F(\delta)$ as a result of \eqref{eq:original} for $k\in[\delta]$ is not continuous and thus not suitable for HOCBF. Therefore, we construct a smooth approximation of $\pi^r_\mathcal F(\delta)$. We first define a parameter set $\mathcal B=\{s, s_A, q, q_A\}$ where
$s,s_A\in \mathbb R_+$ and $q,q_A\in (0,1)$. We also define $\bar A(\mathbf x)$ whose entry is
\begin{equation}
    \bar a_{ij}(\mathbf x) = \begin{cases}\sigma_{s_{A},q_{A}}\big(\big(R^2-\Delta_{ij}(\mathbf x)^2\big)^3\big) & \text{if }  \Delta_{ij}(\mathbf x) < R\\
    0 &\text{otherwise}
\end{cases}.
\label{eq:smooth_adjacency}
\end{equation}
 Note that $\bar a_{ij}(\mathbf x)\to 1$ as robots $i$ and $j$ get closer in distance, while $\bar a_{ij}(\mathbf x)\to 0$ as they get further apart. Also with $\sigma_{s_A,q_A}(0)=0$ and $(R^2-\Delta_{ij}(\mathbf x)^2)^3$, one can verify that it is twice continuously differentiable $\forall \mathbf x\in \mathbb R^{2M}$. By replacing $A(\mathbf x)$ and $H^f$ in \eqref{eq:original} with $\bar {A}(\mathbf x)$ and $\sigma_{s, q}^f$, respectively, we construct $\bar \pi^r_\mathcal F(k) = \begin{bmatrix}
    \bar \pi^r_{l+1}(k) \cdots \bar \pi^r_n(k)
\end{bmatrix}^T$, $k\in[\delta]$, where
\begin{equation}
   \bar \pi^r_\mathcal F(k)
  = \sigma^f_{ s,q}\Big(\begin{bmatrix}
       \mathbf 0_{f\times l} & \mathbf I_f
   \end{bmatrix}\bar A(\mathbf x) \begin{bmatrix}
       \mathbf 1_{l} \\
       \bar \pi^r_\mathcal F(k-1)
   \end{bmatrix}- r\mathbf 1_f\Big)
    \label{eq:sigmoid_percolation}
\end{equation}
with $\bar \pi^r_\mathcal F(0)=\mathbf 0_{f}$. We now show that $\bar \pi^r_{\mathcal{F}}(\delta)$ also gives sufficient information to determine if $\mathcal{G}$ is strongly $r$-robust.

% \begin{figure}
%     \centering
% \includegraphics[width=\linewidth, height=40mm]{pictures/comparison_plots.png}
% \caption{The comparison between the Heaviside step function $H$ and $\sigma_{s,q}$ with varying values of $s \in \mathbb R_+$ and $q\in (0,1)$.}
% \label{fig:comparsion_plots}
% \end{figure}
\begin{lemma} Let $n'\in \mathbb Z_+$, $s'\in \mathbb R_+$, and $q'\in (0,1)$. Then,
 $H^{n'}(\mathbf y)> \sigma^{n'}_{ s',q'}(\mathbf y)$ $\forall \mathbf y \in \mathbb R^{n'}$.
 \label{lem:under}
\end{lemma}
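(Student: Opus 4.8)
The plan is to reduce the claimed vector inequality to a scalar one and then prove the scalar statement directly from the definitions of $H$ and $\sigma_{s,q}$. Since both $H^{n'}$ and $\sigma^{n'}_{s',q'}$ are defined as element-wise operations, the inequality $H^{n'}(\mathbf y) > \sigma^{n'}_{s',q'}(\mathbf y)$ holds for all $\mathbf y \in \mathbb R^{n'}$ if and only if $H(y) > \sigma_{s',q'}(y)$ holds for every scalar $y \in \mathbb R$. So the first step is to state this reduction explicitly, after which it suffices to prove $H(y) > \sigma_{s',q'}(y)$ for all $y \in \mathbb R$.

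For the scalar statement, I would split into the two cases dictated by the definition of $H$ in \eqref{eq:heaviside}. In the case $y \geq 0$, we have $H(y) = 1$, so the goal becomes $\sigma_{s',q'}(y) < 1$. Using the formula \eqref{eq:sigmoid}, $\sigma_{s',q'}(y) = \frac{1+q'}{1+q'^{-1}e^{-s'y}} - q'$, and the claim $\sigma_{s',q'}(y) < 1$ is equivalent to $\frac{1+q'}{1+q'^{-1}e^{-s'y}} < 1 + q'$, i.e. $1 < 1 + q'^{-1}e^{-s'y}$, which holds because $q'^{-1}e^{-s'y} > 0$ for all finite $y$. In the case $y < 0$, we have $H(y) = 0$, and by property (iii) noted after \eqref{eq:sigmoid} we already know $\sigma_{s',q'}(y) < 0 = H(y)$, so the inequality is immediate.

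The only subtlety I expect is bookkeeping around the denominator of $\sigma_{s',q'}$: one must confirm that $1 + q'^{-1}e^{-s'y}$ is strictly positive (so the fraction is well-defined and the direction of the inequality is preserved when cross-multiplying), which follows from $q' \in (0,1)$ and $s' \in \mathbb R_+$ making every term positive. There is no genuine obstacle here; the essential content is simply that the sigmoid is strictly bounded above by its supremum $1$, never attaining it, whereas the Heaviside function equals $1$ on the nonnegative half-line. Hence the strict inequality holds everywhere, and invoking the element-wise definitions completes the argument.
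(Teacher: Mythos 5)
Your proof is correct and follows essentially the same route as the paper's: reduce to the scalar case via the element-wise definitions, then split on the sign of $y$, showing $\sigma_{s',q'}(y)<1=H(y)$ for $y\geq 0$ and $\sigma_{s',q'}(y)<0=H(y)$ for $y<0$ (the paper re-derives the latter from $e^{-s'y}>1$ rather than citing property (iii), but that is the same one-line computation). No gaps.
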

\begin{proof}
   Since $H^{n'}(\cdot)$ and $\sigma^{n'}_{s',q'}(\cdot)$ are element-wise operations of $H(\cdot)$ and $\sigma_{s',q'}(\cdot)$, we just need to prove $H(y_i)> \sigma_{s',q'}(y_i)$. Dropping the subscript $i$ and superscript $'$, we get $\sigma_{s,q}(y) = \frac {1+q} {1+e^{-sy}(1/q)}-q= \frac {q(1+q)} {q+e^{-sy}}-q= \frac {q(1-e^{-sy})} {q+e^{-sy}}$. We know i) $s\in \mathbb R_+$ and ii) $q\in(0,1)$. Thus, 
   
   If $y \geq 0$, $H(y)=1>\sigma_{s,q}(y)\geq 0$ as $0\leq q(1-e^{-sy})<q$. 
   
   If $y < 0$, $H(y)=0>\sigma_{s,q}(y)$ as $e^{-sy}>1$.
\end{proof}
Using Lemma~\ref{lem:under}, we  characterize $\bar \pi^r_{\mathcal{F}}(\delta)$ in terms of $\mathcal G$'s robustness in the following proposition:
\begin{proposition}
Let $\mathcal G(t)=(\mathcal V, \mathcal E(t))$ be a communication graph of a system of $n$ robots with states $\mathbf x(t)$. Let $\mathcal L=[l]\subset \mathcal V$ and $\mathcal F=\mathcal V\setminus\mathcal L$ be leader and follower sets. Let $\delta \in \mathbb Z_+$ such that $\delta\leq f$. Also, let $\bar \pi^r_\mathcal F(\delta)$ be computed from \eqref{eq:sigmoid_percolation} for $k\in [\delta]$ using $\mathcal B=\{s, s_A, q, q_A\}$ with
$s,s_A\in \mathbb R_+$ and $q,q_A\in (0,1)$. Then, $\mathcal G(t)$ is strongly $r$-robust with respect to $\mathcal L$ at time $t$ if $\bar \pi^r_\mathcal F(\delta) \geq \mathbf 0_{f}$. \label{prop:approx}
\end{proposition}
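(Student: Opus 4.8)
The plan is to reduce the continuous statement back to discrete bootstrap percolation and then invoke Corollary~\ref{col:follower_only}. Since $\mathcal L$ activates every follower through (discrete) BP if and only if $\mathcal G$ is strongly $r$-robust with respect to $\mathcal L$, it suffices to show that the hypothesis $\bar\pi^r_\mathcal F(\delta)\ge\mathbf 0_f$ forces the \emph{true} Heaviside percolation of \eqref{eq:percolation} to satisfy $\pi^r_\mathcal F(\delta)=\mathbf 1_f$. First I would fix a follower $j$ and rewrite the sign condition: because $\sigma_{s,q}(y)>0\iff y>0$ and $\sigma_{s,q}(y)\ge 0\iff y\ge 0$, the quantity $\bar\pi^r_j(k)$ is positive (resp. nonnegative) exactly when $\sum_{i\in\mathcal N_j}\bar a_{ji}(\mathbf x)\,w_i>r$ (resp. $\ge r$), where $w_i=1$ for leaders $i\in\mathcal L$ and $w_i=\bar\pi^r_i(k-1)$ for followers.

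The core is an induction on $k$ proving the claim: for every follower $j$, $\bar\pi^r_j(k)>0$ implies $\pi^r_j(k)=1$. I would deliberately phrase the inductive invariant with \emph{strict} positivity so that the base case $k=0$ is vacuous, since $\bar\pi^r_\mathcal F(0)=\mathbf 0_f$ means no follower satisfies $\bar\pi^r_j(0)>0$; this neatly sidesteps the mismatch that $\bar\pi^r_j(0)\ge 0$ holds while $\pi^r_j(0)=0$. For the inductive step, Lemma~\ref{lem:under} is the key tool: it gives $\bar a_{ji}(\mathbf x)<1$ on every edge and $\bar\pi^r_i(k-1)<1$ for every follower, hence $w_i\le 1$ throughout. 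Setting $P=\{i\in\mathcal N_j: w_i>0\}$, every summand with $i\in P$ obeys $\bar a_{ji}w_i<1$ while every summand with $i\notin P$ is nonpositive, so $r<\sum_{i\in\mathcal N_j}\bar a_{ji}w_i<|P|$, i.e. $|P|\ge r+1$. By the induction hypothesis applied at iteration $k-1$, each follower in $P$ is truly active and each leader in $P$ is active by definition; thus $j$ has at least $r$ truly active neighbors at $k-1$, and the rule \eqref{eq:percolation} yields $\pi^r_j(k)=1$, completing the induction.

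Finally I would run the same counting once more at $k=\delta$, but now feeding in the \emph{nonnegative} hypothesis $\bar\pi^r_j(\delta)\ge 0$: this gives $\sum_{i\in\mathcal N_j}\bar a_{ji}w_i\ge r$, and the strict term bound $\sum<|P|$ still forces $|P|>r$, so $j$ again has at least $r$ neighbors that are active at iteration $\delta-1$ (here $\delta\ge 1$ guarantees $\delta-1\ge 0$, so the just-proved claim applies). Hence $\pi^r_j(\delta)=1$ for every follower, i.e. $\pi^r_\mathcal F(\delta)=\mathbf 1_f$, and Corollary~\ref{col:follower_only} delivers strong $r$-robustness. The main obstacle, and the reason for the strict/nonstrict bookkeeping, is precisely that the continuous surrogate underestimates the discrete process: the strict inequalities $\bar a_{ji}<1$ and $w_i<1$ from Lemma~\ref{lem:under} are exactly what convert a \emph{weighted} threshold of $r$ into a genuine \emph{count} of active neighbors exceeding $r$, so that the vacuous base case and the final nonstrict step mesh correctly.
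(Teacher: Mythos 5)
Your proposal is correct and takes essentially the same route as the paper: both reduce to the discrete bootstrap percolation process via Corollary~\ref{col:follower_only} and then run an induction over the BP iterations that exploits the strict under-approximation $\sigma_{s,q}<H$ and $\bar a_{ij}<a_{ij}$ from Lemma~\ref{lem:under}. The only difference is organizational --- the paper tracks elementwise domination $\bar\pi^r_{\mathcal F}(k)<\pi^r_{\mathcal F}(k)$ directly, whereas you track the implication ``$\bar\pi^r_j(k)>0\Rightarrow\pi^r_j(k)=1$'' and add the counting step $r<\sum_{i}\bar a_{ji}w_i<|P|$, which makes explicit a detail the paper's terser argument leaves implicit.
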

\begin{proof} Corollary~\ref{col:follower_only} shows that $\mathcal G$ is strongly $r$-robust with respect to $\mathcal L$ if $\pi^r_\mathcal F(\delta)=\mathbf 1_{f}$. Now, we compare $\pi^r_\mathcal F(\delta)$ with $\bar \pi^r_\mathcal F(\delta)$. With Lemma~\ref{lem:under}, $\bar a_{ij}(\cdot)$ defined at \eqref{eq:smooth_adjacency} always under-approximate $a_{ij}(\cdot)$ defined at \eqref{eq:adjacency}. Again, with Lemma~\ref{lem:under},  $\sigma^n_{s,q}(\cdot)< H^n(\cdot)$ for the same argument. Therefore, with $\bar \pi^r_\mathcal F(0) = \pi^r_\mathcal F(0)$, $\bar \pi^r_\mathcal F(k) <\pi^r_\mathcal F(k)$ $\forall k \in [\delta-1]$. Since i) $\bar \pi^r_\mathcal F(\delta-1) < \pi^r_\mathcal F(\delta-1)$ and ii) $H^n(\mathbf 0_{n})=\mathbf 1_{n}$ while $\sigma^n_{ s,q}(\mathbf 0_{n})=\mathbf 0_{n}$, $\bar \pi^r_\mathcal F(\delta) \geq \mathbf 0_{f}\to \pi^r_\mathcal F(\delta)=\mathbf 1_{f}$.
\end{proof}
\begin{remark}
Proposition~\ref{prop:approx} ensures that $\mathcal G(t)$ is strongly $r$-robust with respect to $\mathcal L$ if $\bar \pi^r_\mathcal F(\delta)\geq \mathbf 0_{f}$, but it also introduces a tradeoff. Since $\bar a_{ij}(\cdot)<1$, with strongly $r_0$-robust $\mathcal G(t_0)$, $\bar \pi^r_\mathcal F(1)<\mathbf 0_{f}$ if $l=r_0$. This means $\bar \pi^r_\mathcal F(\delta)<\mathbf 0_{f}$ unless $l>r_0$. Thus, we only consider $r_0\leq l-1$ in this paper.
 
 \label{remark:robustness}
\end{remark}
\subsection{HOCBF for Robustness Maintenance}
Let $\epsilon\in \mathbb R_+$ be a small constant. Using Proposition~\ref{prop:approx}, we construct the following candidate HOCBFs:
\begin{equation}
    h_r(\mathbf x)=
    \begin{bmatrix}
    h_{r,1}(\mathbf x) \\ \vdots \\ h_{r,f}(\mathbf x)
    \end{bmatrix} = \begin{bmatrix}
    \bar \pi^r_{l+1}(\delta) -\epsilon\\ \vdots \\ \bar \pi^r_n(\delta)-\epsilon
    \end{bmatrix} = \bar \pi^r_\mathcal F(\delta)-\epsilon\mathbf 1_f.
    \label{eq:all_h}
\end{equation}
 Note $h_{r,c}(\mathbf x)$, $c\in[\delta]$, is twice continuously differentiable and has a relative degree $2$ to dynamics \eqref{eq:combined_dynamics}. Thus, we have: 
\begin{equation}
    \begin{split}
    \psi_{c,0}(\mathbf x)\coloneq &h_{r,c}(\mathbf x), \\
    \psi_{c,1}(\mathbf x)\coloneq &\dot \psi_{c,0}(\mathbf x)+\eta_{c,1}\psi_{c,0}(\mathbf x),\\
    \psi_{c,2}(\mathbf x)\coloneq &\dot \psi_{c,1}(\mathbf x)+\eta_{c,2}\psi_{c,1}(\mathbf x),
\end{split}
\label{eq:split_hocbf}
\end{equation}
where $\eta_{c,1},\eta_{c,2}\in \mathbb R_+$. Let $\mathcal C_{c,i}=\{\mathbf x\in \mathbb R^{2M}\mid \psi_{c,i-1}(\mathbf x) \geq 0\}$, $i\in[2]$. When $\mathbf x \in \mathcal C_c =\cap_{i=1}^2\mathcal C_{c,i}$, the agents are arranged so that follower $c$ can maintain sufficient connections to preserve the network's strong 
$r$-robustness at time $t$. Since strong $r$-robustness requires all agents to be sufficiently connected, we need $\mathbf x \in \mathcal C = \bigcap_{c=1}^{f} \mathcal C_c$. Also, we aim to maintain the initial level of robustness or one lower, i.e., $\mathbf x(t_0)\in \mathcal C$. We now assume:
\begin{assump}    \label{assump:collision}
    Inter-agent collision avoidance is enforced.
\end{assump}
\begin{assump}    \label{assump:bounded_input}
    The control input is unbounded: $U =\mathbb R^{M}$.
\end{assump}
Also, we define an extreme agent, adopting from \cite{capelli2020}. Let agent $I \in \mathcal V$ with position $p_I=[p_{I1},\cdots, p_{Im}]^T\in \mathbb R^m$ be an extreme agent at dimension $b$ if $\exists b\in[m]$ such that $p_{Ib} > p_{jb}$ $\forall j\in \mathcal V\setminus\{I\}$ or $p_{Ib} < p_{jb}$ $\forall j\in \mathcal V\setminus\{I\}$. In words, extreme agents at dimension $b$ have the strictly largest or smallest $b^{\text{th}}$ position component.
We assume extreme agents always exist. In fact, under Assumption~\ref{assump:collision}
, their absence would require at least $2m$ agent pairs to have the same position components, which can be easily avoided. We do not address the case formally here. A potential solution is adding small perturbations to a robot's position or defining a different control law for these configurations; fully addressing this limitation remains future work. Now we provide a lemma:

\begin{lemma}
    Let all conditions in Proposition~\ref{prop:approx} hold.
    Let $h_r(\mathbf x)$ \eqref{eq:all_h} be computed with $\mathbf x \in \mathcal C$. Then, $\exists I \in [n], b \in [m]$ such that $\frac {\partial h_{r,c}}{\partial p_{Ib}}(\mathbf x)>0$ $\forall c\in[f]$ or $\frac {\partial h_{r,c}}{\partial p_{Ib}}(\mathbf x)<0$ $\forall c\in[f]$.
    \label{lem:same_sign}
\end{lemma}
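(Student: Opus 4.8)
The plan is to reduce the whole statement to the geometry of a single extreme agent and then push a monotone effect through the percolation recursion. By assumption an extreme agent $I$ exists at some dimension $b$; I take it (without loss of generality) to be the strict maximizer of the $b$-th coordinate, the minimizer case being symmetric and producing the opposite global sign. The key first step is the elementary computation $\frac{\partial \Delta_{Ij}}{\partial p_{Ib}} = \frac{p_{Ib}-p_{jb}}{\Delta_{Ij}}$ (well defined since Assumption~\ref{assump:collision} gives $\Delta_{Ij}>0$), which is strictly positive for every $j \neq I$ exactly because $p_{Ib} > p_{jb}$ for all such $j$. This uniformity is the entire reason an extreme agent is needed: for a non-extreme agent the sign of $p_{Ib}-p_{jb}$ would depend on $j$, so a single scalar input could not move $I$ relative to all others in the same direction.

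Next I would propagate this to the edge weights and then through \eqref{eq:sigmoid_percolation}. Differentiating \eqref{eq:smooth_adjacency} by the chain rule gives $\frac{\partial \bar a_{Ij}}{\partial p_{Ib}} = \sigma'_{s_A,q_A}(\cdot)\cdot 3(R^2-\Delta_{Ij}^2)^2(-2\Delta_{Ij})\cdot\frac{\partial \Delta_{Ij}}{\partial p_{Ib}}$, which is $\leq 0$ and strictly negative precisely when $0 < \Delta_{Ij} < R$ (the cube makes the derivative vanish smoothly at the range boundary, consistent with the $C^2$ claim). Thus raising $p_{Ib}$ weakly lowers every edge weight incident to $I$. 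Writing $D_c(k) := \frac{\partial \bar \pi^r_c(k)}{\partial p_{Ib}}$ and differentiating \eqref{eq:sigmoid_percolation} yields a linear recursion $D_c(k) = \sigma'(\cdot)\big[S_c(k) + \sum_{i\in\mathcal F}\bar a_{ci} D_i(k-1)\big]$ with $D_c(0)=0$, where $S_c(k)$ collects the direct dependence on edges incident to $I$ and every propagation coefficient $\sigma'(\cdot)\bar a_{ci}$ is nonnegative. An induction on $k$ then gives $D_c(\delta)\leq 0$ for all $c\in[f]$, i.e.\ the uniform (weak) sign; choosing $I$ as the minimizer flips all inequalities, which is why the statement offers the two sign options.

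The step I expect to be the main obstacle is upgrading this to a strict inequality for every follower simultaneously. Two issues must be handled. First, the intermediate activations $\bar\pi^r_i(k)$ for $k<\delta$ need not be nonnegative, so the sign of each source term $S_c(k)$ cannot be asserted by a blanket monotonicity argument; I would control this by isolating the leader contributions (whose activation is fixed at $1$) and arguing the sign at the terminating iteration. Second, I must show that the strictly negative source localized at the edges of $I$ actually reaches \emph{every} follower's $D_c(\delta)$. For this I would invoke Proposition~\ref{prop:approx}: since $\mathbf x\in\mathcal C$, $\mathcal G$ is strongly $r$-robust with respect to $\mathcal L$, hence connected and percolated within $\delta$ iterations, and I would run an induction showing the set of followers with $D_c<0$ expands along the nonnegative coefficients $\bar a_{ci}$ until it exhausts $\mathcal F$. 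Establishing this reachability within the allotted $\delta$ steps, rather than merely in a neighborhood of $I$, is the delicate part of the argument.
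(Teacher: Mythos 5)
Your overall strategy --- pick an extreme agent $I$ at dimension $b$, observe that $\partial \Delta_{Ij}/\partial p_{Ib} = (p_{Ib}-p_{jb})/\Delta_{Ij}$ has one uniform sign over all $j\neq I$, push this through \eqref{eq:smooth_adjacency} and the recursion \eqref{eq:sigmoid_percolation} by the chain rule, and conclude that all surviving terms share a sign so no cancellation can occur --- is exactly the paper's argument, and the weak inequality $D_c(\delta)\le 0$ you obtain is sound. The problem is that the weak inequality is not the lemma: the statement (and its use in Theorem~\ref{thm:one_valid}, where $\frac{\partial h_{r,c}}{\partial \mathbf p}\neq \mathbf 0_M^T$ is what is actually needed) requires the \emph{strict} inequality for every $c\in[f]$, and that is precisely the part you defer. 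You name the two obstacles correctly but resolve neither, so the proposal as written proves only sign consistency, not nonvanishing.

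On the first obstacle (the intermediate activations $\bar\pi^r_i(k)$ appearing as coefficients of $\partial\bar a_{ji}/\partial p_{Ib}$ with a priori unknown sign), the paper's route is to invoke $\mathbf x\in\mathcal C$, which forces $\bar\pi^r_i(\cdot)\ge\epsilon>0$ at the relevant iteration so that these coefficients preserve the common sign; your plan to ``isolate the leader contributions'' points the same way but is not carried out. On the second obstacle, your proposed fix does not work as stated: the quantity you call $D_c(\delta)$ is nonzero only if the perturbation localized at the single node $I$ reaches follower $c+l$ through at most $\delta$ unrollings of the recursion, i.e., only if $c+l$ is within graph distance roughly $\delta$ of $I$. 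Strong $r$-robustness (equivalently, percolation from $\mathcal L$) guarantees every follower is activated within $\delta$ steps \emph{from the leader set}, not that every follower is close to the particular extreme agent $I$, which may itself be a peripheral follower; so invoking Proposition~\ref{prop:approx} and ``expanding the set of followers with $D_c<0$ along nonnegative coefficients'' does not by itself exhaust $\mathcal F$. (Indeed, for $\delta=1$ the derivative of $h_{r,c}$ involves only edges between follower $c+l$ and leaders, so a follower $I$ not equal to $c+l$ contributes nothing.) This is the genuinely delicate point of the lemma --- the paper itself dispatches it tersely via $|\mathcal N_I|\ge 1$ and a case split on $D$ --- but your sketch of how to close it appeals to the wrong propagation process and would need to be replaced by an argument about the graph distance from $I$ to each follower.
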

\begin{proof}Let $D=\delta-1$, $c\in[f]$, and $j=c+l$. With $\bar \pi^r_j(0)=0$ $\forall j\in \mathcal F$, we write $\bar \pi^r_j(k+1)$, $k\in\{0,\cdots,D\}$, as
\begin{equation}
    \bar \pi^r_j(k+1) = \sigma_{s,q}\underbrace{\Big(\sum_{i=1}^l\bar a_{ji}+ \sum_{i=l+1}^n\bar a_{ji}\bar \pi^r_i(k)-r\Big)}_{B_{j,k}}.
    \label{eq:rewrite}
\end{equation}
Considering extreme agent $I\in \mathcal V$ at dimension $b\in[m]$, the partial derivative of $h_{r,c}(\mathbf x)$ with respect to $p_{Ib}$ is
\begin{multline}
    \frac {\partial h_{r,c}}{\partial p_{Ib}} = \frac {\partial \sigma_{s,q}}{\partial B_{j,D}}\Big(\sum_{i=1}^l\frac {\partial \bar a_{ji}}{\partial p_{Ib}}+\\\sum_{i=l+1}^n \Big(\frac {\partial \bar a_{ji}}{\partial p_{Ib}}\bar \pi^r_i(D)+ \bar a_{ji}\frac {\partial \bar \pi^r_i(D)}{\partial p_{Ib}}\Big)\Big),
    \label{eq:derivative1}
\end{multline}
where, with $n_{ij}(\mathbf x)=[\Delta_{ij}(\mathbf x)]^2$,
\begin{multline}
        \frac {\partial \bar \pi^r_i(k+1)}{\partial p_{Ib}} = \frac {\partial \sigma_{s,q}}{\partial B_{i,k}}\Big(\sum_{g=1}^l\frac {\partial \bar a_{ig}}{\partial p_{Ib}}+\\\sum_{g=l+1}^n \Big(\frac {\partial \bar a_{ig}}{\partial p_{Ib}}\bar \pi^r_g(k)+ \bar a_{ig}\frac {\partial \bar \pi^r_g(k)}{\partial p_{Ib}}\Big)\Big),
    \label{eq:derivative2}
\end{multline} 
\begin{equation}
    \frac {\partial \bar a_{ji}}{\partial p_{Ib}} = \begin{cases}
    \frac{\partial \bar a_{ji}}{\partial n_{ji}}\frac{\partial n_{ji}}{\partial p_{Ib}} & \text{if } I\in\{i,j\} \\
    0 & \text{otherwise}
    \end{cases}.
    \label{eq:sigma_der}
\end{equation}
Now we list some facts:
\begin{itemize}
    \item  $\frac {\partial \sigma_{s,q}}{\partial B_{j,D}}(B_{j,D})>0$, since $\sigma_{s,q}$ is strictly monotonically increasing and continuously differentiable functions.
    \item If $I\in\{i,j\}$ and $(i,j)\in \mathcal E$, $\frac{\partial \bar a_{ji}}{\partial n_{ji}}(\mathbf x)>0$.
    \item If $I\in\{i,j\}$ and $i\neq j$, $\frac{\partial n_{ji}}{\partial p_{Ib}}(\mathbf x)=(\pm2(p_{jb}-p_{ib}))>0$ $\forall i,j\in[n]$ or $\frac{\partial n_{ji}}{\partial p_{Ib}}(\mathbf x)<0$ $\forall i,j\in[n]$.
    \item If $D>1$, $\bar \pi^r_i(D)> \epsilon$ $\forall i\in\mathcal F$ as $\mathbf x \in \mathcal C$. 
\end{itemize}
Since $|\mathcal N_I|\geq 1$, we can infer that if $I\in\{i,j\}$ and $i\neq j$, $\exists i,j \in \mathcal V$ that \eqref{eq:sigma_der} is nonzero, and the nonzero values are either all positive or negative. If $D=0$, $\forall j\in \mathcal F$ $\exists i\in [l]$ that $\frac{\partial \bar a_{ji}}{\partial n_{ji}}(\mathbf x)>0$, so $\frac {\partial h_{r,c}}{\partial p_{Ib}}>0$ or $\frac {\partial h_{r,c}}{\partial p_{Ib}}<0$ $\forall c\in[f]$. If $D>0$, either $\frac {\partial \bar \pi^r_i(k)}{\partial p_{Ib}}>0$ $\forall i\in[f]$ $\forall k\in[D]$ or $\frac {\partial \bar \pi^r_i(k)}{\partial p_{Ib}}<0$ $\forall i\in[f]$ $\forall k\in[D]$. Then, $\forall c\in[f]$ i) \eqref{eq:derivative1} has nonzero terms, and ii) all of them for all $c$ have the same sign. 
\end{proof}

Now we use Lemma~\ref{lem:same_sign} to validate our HOCBFs:
\begin{thm}
Let all conditions in Lemma~\ref{lem:same_sign} hold.
    Also, let $\psi_{c,2}(\mathbf x)$ be from \eqref{eq:split_hocbf}. Then, $h_{r,c}(\mathbf x)$ is a HOCBF of relative degree $2$ for system \eqref{eq:combined_dynamics} $\forall c\in[f]$.
    \label{thm:one_valid}
\end{thm}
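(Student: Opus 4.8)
The plan is to verify the HOCBF condition from the Definition directly. The class $\mathcal K$ functions are already fixed in \eqref{eq:split_hocbf} as the linear maps $\alpha_{c,1}(s)=\eta_{c,1}s$ and $\alpha_{c,2}(s)=\eta_{c,2}s$, which are differentiable and class $\mathcal K$ since $\eta_{c,1},\eta_{c,2}\in\mathbb R_+$. Hence the only thing left to establish is $\sup_{\mathbf u \in U}\psi_{c,2}(\mathbf x)\geq 0$ for every $\mathbf x\in\mathcal C$ and every $c\in[f]$. The key observation is that $h_{r,c}$ depends only on the positions $\mathbf p$, so the control input enters $\psi_{c,2}$ through a single, explicitly identifiable term.

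First I would expand $\psi_{c,2}$ along the dynamics \eqref{eq:combined_dynamics}. Since $\dot{\mathbf p}=\mathbf v$ and $\dot{\mathbf v}=\mathbf u$, we have $\psi_{c,0}=h_{r,c}(\mathbf x)$, then $\dot\psi_{c,0}=\tfrac{\partial h_{r,c}}{\partial \mathbf p}\mathbf v$, and $\psi_{c,1}=\tfrac{\partial h_{r,c}}{\partial \mathbf p}\mathbf v+\eta_{c,1}h_{r,c}$, which contains no $\mathbf u$. Differentiating once more, the only term carrying $\mathbf u$ is $\tfrac{\partial h_{r,c}}{\partial \mathbf p}\dot{\mathbf v}=\tfrac{\partial h_{r,c}}{\partial \mathbf p}\mathbf u$; every remaining contribution (the position-Hessian term contracted with $\mathbf v$, together with $\eta_{c,2}\psi_{c,1}$) depends only on $(\mathbf p,\mathbf v)$. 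I can therefore write $\psi_{c,2}(\mathbf x)=\tfrac{\partial h_{r,c}}{\partial \mathbf p}\mathbf u+\Phi_c(\mathbf x)$, where $\Phi_c$ collects the drift terms. This simultaneously confirms that the relative degree equals $2$ and that the coefficient of $\mathbf u$ is exactly the position gradient $\tfrac{\partial h_{r,c}}{\partial \mathbf p}$.

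Next I would invoke Lemma~\ref{lem:same_sign}: for $\mathbf x\in\mathcal C$ there exist $I\in[n]$ and $b\in[m]$ with $\tfrac{\partial h_{r,c}}{\partial p_{Ib}}(\mathbf x)\neq 0$ for all $c\in[f]$, so in particular the row vector $\tfrac{\partial h_{r,c}}{\partial \mathbf p}$ is nonzero for each $c$. Finally, under Assumption~\ref{assump:bounded_input} the input set is $U=\mathbb R^M$, so I can scale the $u_{Ib}$ component with the appropriate sign to drive the scalar $\tfrac{\partial h_{r,c}}{\partial \mathbf p}\mathbf u$ to $+\infty$. Consequently $\sup_{\mathbf u\in U}\psi_{c,2}(\mathbf x)=+\infty\geq 0$ for all $\mathbf x\in\mathcal C$ and all $c\in[f]$, which is precisely the HOCBF condition.

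The substantive effort has already been absorbed by Lemma~\ref{lem:same_sign}, which guarantees that the position gradient does not vanish on $\mathcal C$; given that fact, this theorem reduces to careful bookkeeping. The only points requiring attention are the clean separation of the control-affine term from the drift in $\psi_{c,2}$, and the remark that nonvanishing of the \emph{single} component $\tfrac{\partial h_{r,c}}{\partial p_{Ib}}$ already suffices to render the supremum unbounded above once the input is unconstrained by Assumption~\ref{assump:bounded_input}.
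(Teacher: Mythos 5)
Your proposal is correct and follows essentially the same route as the paper: expand $\psi_{c,2}$ to isolate the control-affine term $\frac{\partial h_{r,c}}{\partial \mathbf p}\mathbf u$, invoke Lemma~\ref{lem:same_sign} for nonvanishing of the position gradient, and use Assumption~\ref{assump:bounded_input} to make the supremum unbounded above. You add slightly more detail than the paper (explicitly noting that a single nonzero component $\frac{\partial h_{r,c}}{\partial p_{Ib}}$ suffices per $c$, without needing the same-sign property that Lemma~\ref{lem:same_sign} reserves for the later composition theorem), but the substance is identical.
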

\begin{proof}
    For each $c\in[f]$, we need to show that 
   \begin{equation}
    \begin{split}
        \sup\limits_{\mathbf u \in U}  \big[\psi_{c,2}(\mathbf x)\big]\geq 0,\quad\forall \mathbf x \in \mathcal C,\\
    \end{split}
    \label{eq:sup_h2}
    \end{equation}
where 
\begin{equation}
    \psi_{c,2}(\mathbf x) =\underbrace{\mathbf v^T \frac {\partial^2 h_{r,c}}{\partial \mathbf p^2}\mathbf v + \frac {\partial h_{r,c}}{\partial \mathbf p}\mathbf u + \eta_{c,1}\frac {\partial h_{r,c}}{\partial \mathbf p}\mathbf v}_{\dot \psi_{c,1}(\mathbf x)} + 
    \eta_{c,2}\psi_{c,1}(\mathbf x).
    \label{eq:second_der}
\end{equation}
With Assumption~\ref{assump:bounded_input}, it suffices to show that $\frac {\partial h_{r,c}}{\partial \mathbf p}(\mathbf x)\neq \mathbf 0_{M}^T$ $\forall x\in \mathcal C$ $\forall c\in[f]$, which is given in Lemma~\ref{lem:same_sign}.
\end{proof}

Note the HOCBFs \eqref{eq:all_h} are functions of robots' states $\mathbf x$ and desirable level of strong $r$-robustness. They directly consider robustness without imposing maintenance of fixed topologies, allowing robots to have more flexible formations.

\section{CBF Composition}
\label{sec:composition}
\subsection{Composition of Multiple HOCBFs}
Given $f$ HOCBFs \eqref{eq:all_h}, we now seek to compose them into one CBF. Composition methods are studied in \cite{black2023, egerstedt2018, glotfelter2017, breeden2023}, and in this paper we use a form inspired by \cite{black2023}. We define a candidate CBF $\phi_r:\mathbb R^{2M}\times \mathbb R_+^f\to \mathbb R$ as follows:
\begin{equation}
    \phi_r(\mathbf x, \mathbf w)=1-\sum_{c=1}^{f}e^{-w_c\psi_{c,1}(\mathbf x)},
    \label{eq:consolidated}
\end{equation}
where $\mathbf w=\{w_c \in \mathbb R_+\mid c\in[f]\} \in \mathbb R_+^f$. Its structure implies that $\phi_r(\mathbf x,\mathbf w)\geq 0$ only when $\psi_{c,1}(\mathbf x)\geq0$ $\forall c\in[f]$. Then, the safety set $\mathcal S(\mathbf w)=\{\mathbf x \in \mathbb R^{2M} \mid \phi_r(\mathbf x,\mathbf w)\geq0\} \subset \mathcal C$. How close $\mathcal S(\mathbf w)$ gets to $\mathcal C$ depends on the values of $\mathbf w$. Note in \cite{black2023}, the values of $\mathbf w$ are adjusted to validate their composed CBF. However, due to the nature of our HOCBFs \eqref{eq:all_h}, we do not need to adjust them, as shown below:
\begin{thm}
    Let all conditions in Theorem~\ref{thm:one_valid} hold. Let 
\begin{equation}
        \psi(\mathbf x)=\begin{bmatrix}
            \psi_{1,1}(\mathbf x) & \cdots & \psi_{f,1}(\mathbf x)
        \end{bmatrix}
    \end{equation} 
    and $\mathbf w\in \mathbb R^f_+$. Then, \eqref{eq:consolidated} is a valid CBF with system \eqref{eq:combined_dynamics}.
    \label{thm:combined_valid}
\end{thm}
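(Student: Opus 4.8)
The plan is to treat \eqref{eq:consolidated} as a first-order CBF and verify the standard CBF condition directly, exploiting the unbounded input set of Assumption~\ref{assump:bounded_input}. Each $\psi_{c,1}$ has relative degree $1$ with respect to \eqref{eq:combined_dynamics}, since $h_{r,c}$ depends only on position and hence $\psi_{c,1}$ depends on $\mathbf v$, whose derivative is the control $\mathbf u$. Consequently the composite $\phi_r$, being a smooth function of the $\psi_{c,1}$'s, also has relative degree $1$, and it suffices to exhibit a class $\mathcal K$ function $\alpha$ with $\sup_{\mathbf u \in U}\big[\dot \phi_r(\mathbf x,\mathbf w) + \alpha(\phi_r(\mathbf x,\mathbf w))\big]\ge 0$ for all $\mathbf x \in \mathcal S(\mathbf w)$.

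First I would differentiate \eqref{eq:consolidated} along the dynamics and isolate the term multiplying $\mathbf u$. By the chain rule, and using the fact (read off from \eqref{eq:second_der}) that the control-affine part of $\dot \psi_{c,1}$ is $\frac{\partial h_{r,c}}{\partial \mathbf p}\mathbf u$, the coefficient of $\mathbf u$ in $\dot \phi_r$ is
\[
\mathbf g(\mathbf x,\mathbf w) := \sum_{c=1}^{f} w_c\, e^{-w_c \psi_{c,1}(\mathbf x)}\,\frac{\partial h_{r,c}}{\partial \mathbf p}(\mathbf x).
\]
Because $U=\mathbb R^M$, the supremum over $\mathbf u$ is $+\infty$ (hence nonnegative) at every $\mathbf x$ where $\mathbf g(\mathbf x,\mathbf w)\neq \mathbf 0_M^T$, independently of the chosen $\alpha$. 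The whole theorem therefore reduces to showing $\mathbf g(\mathbf x,\mathbf w)\neq \mathbf 0_M^T$ for all $\mathbf x \in \mathcal S(\mathbf w)$.

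To establish this I would invoke Lemma~\ref{lem:same_sign}. Since $\mathcal S(\mathbf w)\subset \mathcal C$ by construction of \eqref{eq:consolidated}, every $\mathbf x \in \mathcal S(\mathbf w)$ lies in $\mathcal C$, so the lemma supplies an extreme-agent index $I\in[n]$ and a dimension $b\in[m]$ for which $\frac{\partial h_{r,c}}{\partial p_{Ib}}(\mathbf x)$ is nonzero and shares a common sign across all $c\in[f]$. I would then inspect the single $(I,b)$-component of $\mathbf g$, namely $\sum_{c=1}^f w_c e^{-w_c \psi_{c,1}(\mathbf x)}\frac{\partial h_{r,c}}{\partial p_{Ib}}(\mathbf x)$, in which every prefactor $w_c e^{-w_c \psi_{c,1}(\mathbf x)}$ is strictly positive. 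Since each summand then carries the common sign guaranteed by the lemma, no cancellation can occur, this component is strictly nonzero, and hence $\mathbf g(\mathbf x,\mathbf w)\neq \mathbf 0_M^T$. This closes the argument for any $\mathbf w\in \mathbb R_+^f$.

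The main obstacle, and the conceptual heart of the claim, is precisely this non-cancellation step. For a generic composition of the form \eqref{eq:consolidated} the weighted sum $\mathbf g$ could vanish even when each $\frac{\partial h_{r,c}}{\partial \mathbf p}$ is nonzero, which is why \cite{black2023} must tune $\mathbf w$ to validate the composite CBF. Here the sign alignment from Lemma~\ref{lem:same_sign}, combined with the sign-definiteness of the exponential weights, rules this out uniformly, so the composite stays valid for \emph{every} positive weight vector. The remaining pieces, the chain-rule differentiation and the unbounded-input conclusion, are routine; the only care needed is confirming that the coefficient of $\mathbf u$ is exactly $\frac{\partial h_{r,c}}{\partial \mathbf p}$ as identified from \eqref{eq:second_der}.
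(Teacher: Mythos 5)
Your proposal is correct and follows essentially the same route as the paper: both reduce validity to showing the coefficient of $\mathbf u$ in $\dot\phi_r$, namely $\frac{\partial \phi_r}{\partial \psi}\frac{\partial h_r}{\partial \mathbf p}$, is nonzero on $\mathcal S(\mathbf w)\subset\mathcal C$, and both close the argument by combining the strictly positive weights $w_c e^{-w_c\psi_{c,1}}$ with the common-sign column supplied by Lemma~\ref{lem:same_sign} under Assumption~\ref{assump:bounded_input}. Your explicit remark on why no weight tuning is needed (unlike \cite{black2023}) matches the paper's surrounding discussion.
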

\begin{proof}
We know \eqref{eq:consolidated} is continuously differentiable and has a relative degree $1$ with system \eqref{eq:combined_dynamics}.
Now, we prove:
\begin{equation}
    \begin{split}
        \sup\limits_{\mathbf u \in U}  \Big[\frac {\partial \phi_r}{\partial \mathbf x} (\mathbf A \mathbf x + \mathbf B \mathbf u)\Big]\geq 0,\quad\forall \mathbf x \in \mathcal C.
    \end{split}
    \label{eq:hdot}
    \end{equation}
We can rewrite $\frac {\partial \phi_r}{\partial \mathbf x} (\mathbf A \mathbf x + \mathbf B \mathbf u)$ in \eqref{eq:hdot} as $\frac {\partial \phi_r}{\partial \psi}\dot {\psi}$ where 
\begin{equation}
    \frac {\partial \phi_r}{\partial \psi}(\mathbf x) = \begin{bmatrix}
        w_1e^{-w_1\psi_{1,1}(\mathbf x)} &\cdots &w_fe^{-w_f\psi_{f,1}(\mathbf x)}
    \end{bmatrix},
    \label{eq:weight_der}
\end{equation}
\begin{equation}
    \dot \psi(\mathbf x) = \begin{bmatrix}
       \dot \psi_{1,1}(\mathbf x)^T & \cdots & \dot \psi_{f,1}(\mathbf x)^T
    \end{bmatrix}^T.
    \label{eq:psi_dot}
\end{equation}
 
From \eqref{eq:second_der}, the coefficient of the control input $\mathbf u$ in $\dot \psi$ is $\frac {\partial h_r}{\partial \mathbf p}$. Then, with Assumption~\ref{assump:bounded_input}, we just need to show that
   \begin{equation}
    \begin{split}
        \frac {\partial \phi_r}{\partial \psi}(\mathbf x)\frac {\partial h_r}{\partial \mathbf p}(\mathbf x) \neq \mathbf 0_{M}^T
    \end{split}, \quad  \forall \mathbf x \in \mathcal S(\mathbf w) \subset \mathcal C.
    \label{eq:ucoeff}
    \end{equation}
Since i) $\exists$ column in $\frac {\partial h_{r}}{\partial \mathbf p}$ where all nonzero entries have the same sign (Lemma~\ref{lem:same_sign}), and ii) $\frac {\partial \phi_r}{\partial \psi}(\mathbf x)>\mathbf 0_{f}^T$, \eqref{eq:ucoeff} holds.
\end{proof}

Using our CBF \eqref{eq:consolidated}, the robots can maintain robustness without specific network topologies unlike the assumptions in \cite{time_varying_strongly_usevitch20, strongly_usevitch18,rezaee21,li2024}, while minimally deviating from $\mathbf u_{\text{des}}$. Note it only takes robots' states and desirable robustness level. This provides robots with flexible reconfigurable network formations such that they deviate from $\mathbf u_{\text{des}}$ only when their motions conflict with desire to maintain robustness.

\section{Experimental Results}
\label{sec:experiment}
Now we demonstrate our work through simulations and hardware experiments. For practical deployments, we incorporate inter-agent and obstacle collision avoidance into our controller. Modeling robots and obstacles as point masses, robots must maintain distances of at least $\Delta_d$ from each other and $\Delta_o$ from obstacles. These constraints are encoded as additional CBFs and composed with \eqref{eq:consolidated} into a candidate CBF $Y:\mathbb R^{2M} \times \mathbb R_+^{f} \to \mathbb R$, which defines a safety set $\mathcal W=\{\mathbf x \in \mathbb R^{2M}\mid Y(\mathbf x, \mathbf w)\geq 0\}\subset \mathcal S(\mathbf w)$. Note that establishing $Y(\mathbf x,\mathbf w)$ as a valid CBF is an open problem and is not addressed in this paper. We now construct our CBF-based QP (CBF-QP) controller:
\begin{equation}
\begin{split}
    \mathbf u(\mathbf x)= &\arg \min\limits_{\mathbf u\in \mathbb R^{M}}||\mathbf u_{\text{des}}-\mathbf u||^2 
    \\
     \text{s.t. } 
     & \frac {\partial Y}{\partial \mathbf x}\big(\mathbf A \mathbf x + \mathbf B \mathbf u\big)\geq -\alpha_T(Y(\mathbf x,\mathbf w)),
\end{split}
\label{eq:total_qp}
\end{equation}
where $\alpha_T(\cdot)$ is a class $\mathcal K$ function. 
% \begin{thm}
% Let $\mathcal W\neq \emptyset$ and all conditions in Theorem~\ref{thm:one_valid} hold such that $Y(\mathbf x(t_0), \mathbf w)\geq0$. If $\exists 
%  \mathbf u \in U$ that satisfies the constraint in \eqref{eq:total_qp} $\forall \mathbf x\in \mathcal W$, $\mathbf u(\mathbf x)$ ensures the robots form $\mathcal G(t)$ that is strongly $r$-robust with respect to $\mathcal L$ $\forall t\geq t_0$.
% \label{thm:main}
% \end{thm}
% \begin{proof}
% Since \eqref{eq:total_qp} is Lipschitz continuous, we use  Theorem~\ref{thm:hocbf} to conclude that \eqref{eq:total_qp} renders $\mathcal W\subset \mathcal S(\mathbf w)\subset \mathcal C$ forward invariant for dynamics \eqref{eq:combined_dynamics}. Then, with Proposition~\ref{prop:approx}, $\mathcal G(t)$ is strongly $r$-robust with respect to $\mathcal L$ $\forall t\geq t_0$.
% \end{proof}

In this setup, robots with states $\mathbf x$ form a network using \eqref{eq:edgeset} and constantly emit LED colors corresponding to their scalar consensus values in $[0,1]$. Robots share their values with neighbors every $\tau=0.5$ seconds. Normal leaders all maintain the same value $f_l\in[0,1]$ and thus display the same LED color, while normal followers start with random values in $[0,1]$ and adjust them through W-MSR to match leaders' LED colors (consensus values). Malicious agents randomly choose and share values in $[0,1]$ every $\tau$ seconds, updating their LEDs accordingly. We evaluate efficacy of our CBF by observing whether followers successfully match their LED colors to the normal leaders' LED colors despite malicious agents. We set $\delta=4$, $\epsilon=10^{-4}$, $\Delta_d = 0.3$ m, and $\Delta_o \in\{0.4,0.7\}$ m, based on obstacle sizes. Codes and video are available \href{https://github.com/joonlee16/Resilient-Leader-Follower-CBF-QP}{here}\footnote{\label{note1}https://github.com/joonlee16/Resilient-Leader-Follower-CBF-QP}.

\subsection{Simulation}
\begin{figure}[H]
    \centering
\includegraphics[width=.49\textwidth]{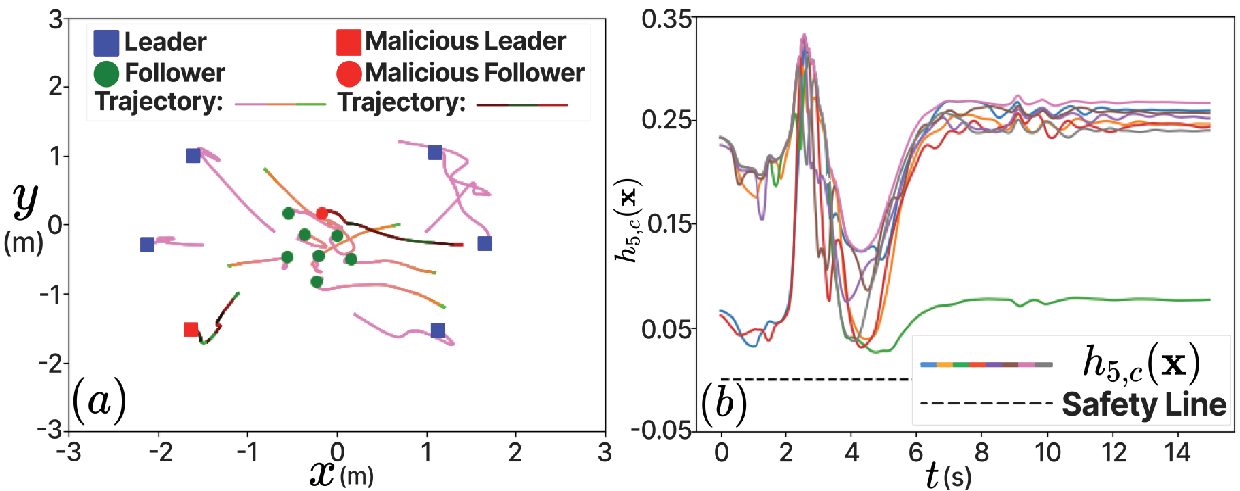}
    \caption{\small{(a) and (b) display the evolutions of robots' trajectories in their LED colors and $h_{5}(\mathbf x)$ \eqref{eq:all_h} from the first simulation, respectively.}}
    \label{fig:first}
\end{figure}
\begin{figure}
    \centering
\includegraphics[width=.48\textwidth]{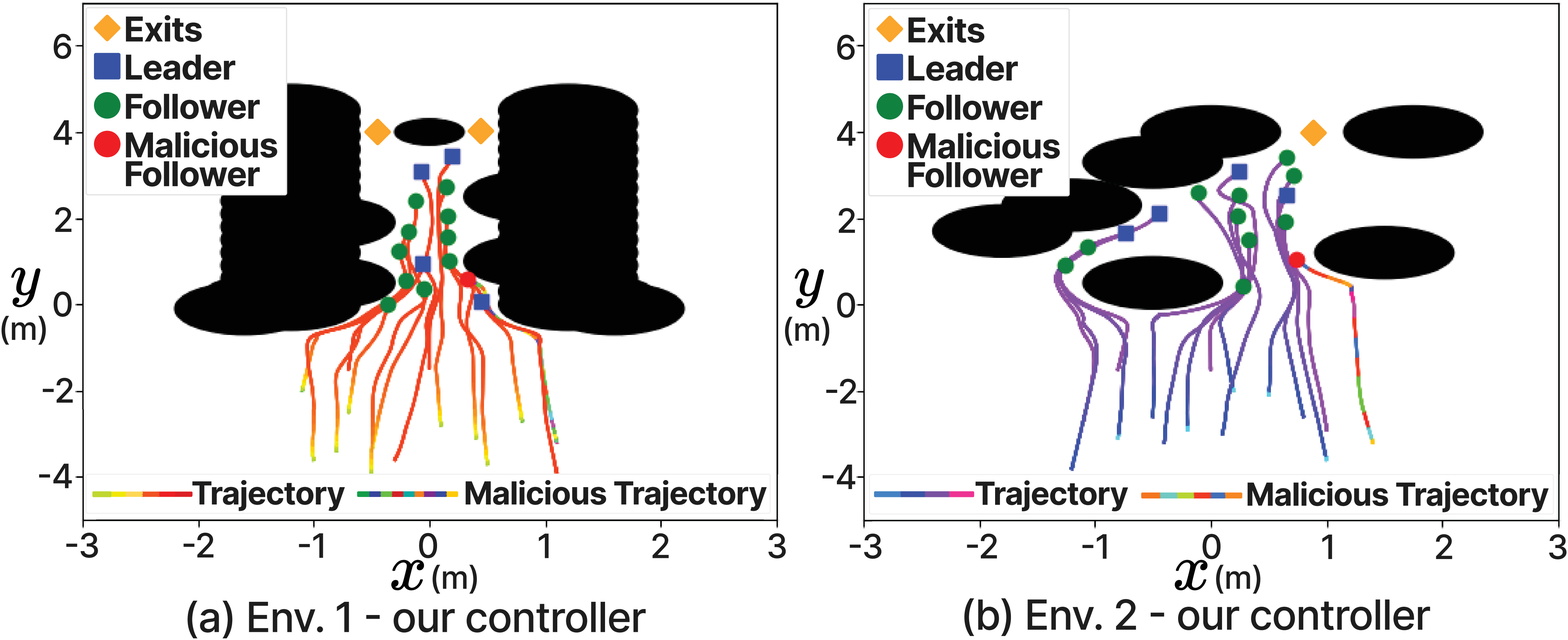}
    \caption{\small{(a) and (b) show the snapshots of the simulations with our controller and dynamics \eqref{eq:combined_dynamics} in Env. 1 and 2, respectively.}}
    \label{fig:env}
\end{figure}
\begin{figure}
    \centering
\includegraphics[width=.49\textwidth]{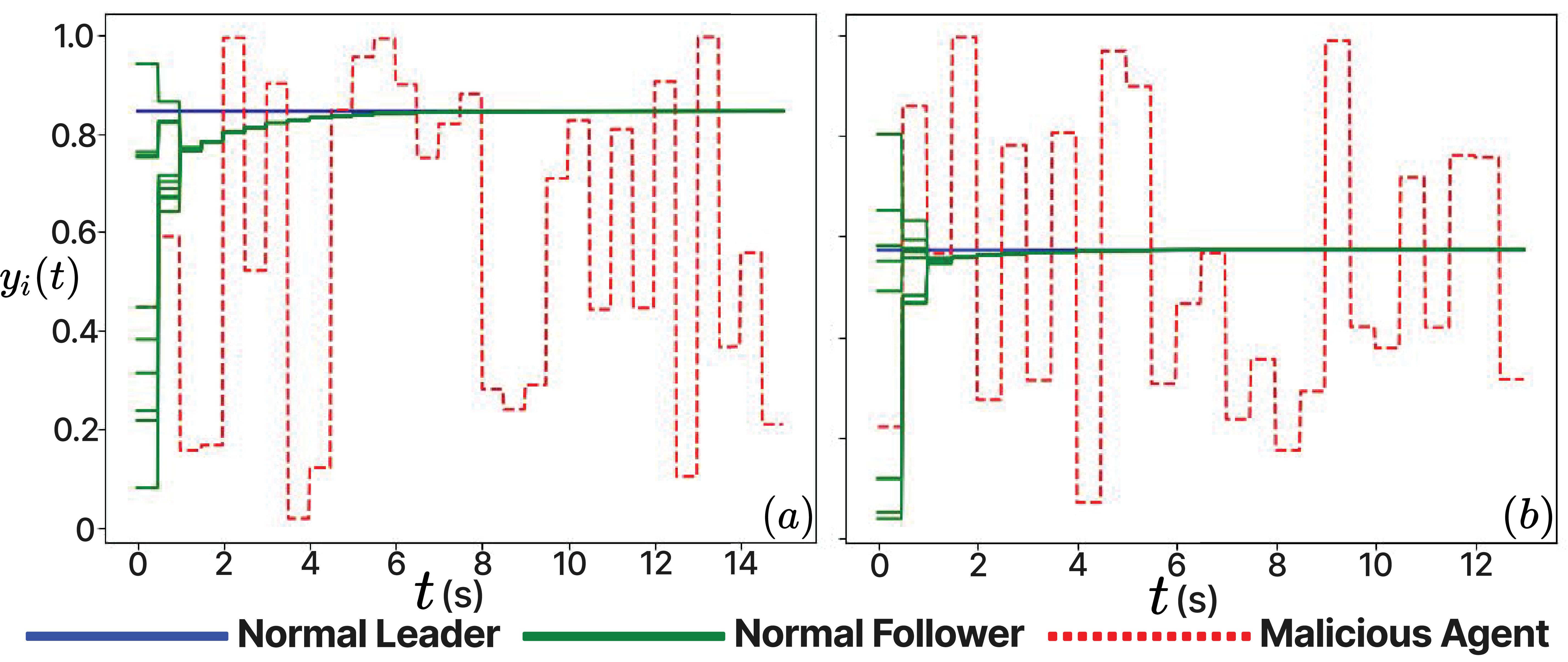}
    \caption{\small{(a) and (b) show the evolutions of robots' consensus values for the simulations visualized in Fig.~\ref{fig:env} (a) and (b), respectively.}}
    \label{fig:consensus}
    \vspace{-5mm}
\end{figure}

\definecolor{lightgray}{gray}{0.7}
\begin{table}
    \centering
    \begin{tabular}{|l|c|c|}
      \hline & Env. $1$ & Env. $2$  \\\hline
    Baseline $1$ &   $47.65$ (Failed $36$ times) & 45.57 (Failed $18$ times)\\ \hline
   Baseline $2$   & $13.46$ & $12.42$\\   \hline
\cellcolor{lightgray}Ours  &  \cellcolor{lightgray}$11.72$ &  \cellcolor{lightgray}$11.82$\\    \hline
\end{tabular}
\caption{\small{Time (in seconds) for controllers to reach exits in two environments from Fig.~\ref{fig:env}. For Baseline $1$, we record the average performance of successful trials out of $50$ attempts with randomly generated topologies. Trials with $t\geq 50$ were considered failures.}}
\label{tab:records}
\end{table}

\textbf{1) Spread Out:} We show that our CBF can maintain strong $r$-robustness, even when the desired control input directly conflicts with maintaining the network’s robustness. Let $p_{i,\text{goal}}$ be a goal location for agent $i$. We set $\mathbf u_{\text{des}}$ as
\begin{equation}
u_{i,\text{des}}
    \begin{cases}
        \frac {(p_{i,\text{goal}}-p_i)}{||p_{i,\text{goal}}-p_i||} - v_i, & i \in \mathcal L, \\
        0, &  i\in \mathcal F,
    \end{cases}
    \label{eq:nominal_input}
\end{equation}
where we set $p_{i,\text{goal}}$ such that leaders would spread out in different directions. In the initial setup, we had six leaders (one malicious) and eight followers (one malicious) forming a strongly $6$-robust network with a communication range $R=3$ m. The system aimed to maintain strong $5$-robustness to ensure consensus under a $2$-local malicious attack. Fig.~\ref{fig:first} (a) shows the robots' trajectories colored in their LED colors, while Fig.~\ref{fig:first} (b) shows the evolutions of $h_5(\mathbf x)$ \eqref{eq:all_h}. The network remained strongly $5$-robust throughout the simulation, and followers successfully matched their LEDs to normal leaders'. Notably, when the network approached losing $5$-robustness at around $t = 4$ (Fig.~\ref{fig:first} (b)), the leaders stopped dispersal while the followers gathered at the center.

\textbf{2) Complex Environments:} We evaluated our controller in two complex environments visualized in Fig.~\ref{fig:env}. In the initial setup, four normal leaders and eleven followers (one malicious) formed a strongly $4$-robust network with $R=3$ m. Each robot $i$ had $u_{i,\text{des}}=\frac {p_{\text{goal}}-p_i}{||p_{\text{goal}}-p_i||}-v_i$ where $p_{\text{goal}}=[0,100]^T$. The system aimed to maintain strong $3$-robustness to achieve consensus under a $1$-local malicious attack. Fig.~\ref{fig:env} and \ref{fig:consensus} depict the robots' trajectories in their LED colors and consensus values, respectively. In both environments, the system maintained strong $3$-robustness, and normal followers matched their LED colors to those of the normal leaders.

\textbf{3) Comparisons:} To demonstrate the flexibility of our CBF in network formation, we compared its performance with two other CBFs: Baseline $1$ that maintains fixed topologies as in \cite{egerstedt2018} and Baseline $2$ from \cite{cavorsi2022}. For each, we measured the time for all robots to reach the exits in two environments (Fig.~\ref{fig:env}) while maintaining strong $3$-robustness. Although Baseline $2$ focuses on $r$-robustness which does not extend to strong $r$-robustness \cite{strongly_usevitch18, rezaee21}, we included it to maintain $3$-robustness for a more comprehensive comparison. For fairness, all comparisons were conducted with 1) the same parameters, 2) single integrator dynamics, as in \cite{cavorsi2022, egerstedt2018}, and 3) a centralized CBF-QP controller setup. A more flexible controller should lead to shorter exit times. Table~\ref{tab:records} displays the recorded times. Trials with $t\geq50$ were considered failures. Baseline $1$ was evaluated based on the average performance of successful trials across $50$ attempts. Each trial used strongly $3$-robust Erdös–Rényi random graphs \cite{erdos1964,bollobas2001} with $n=15$ nodes and an edge probability of $0.3$. Table~\ref{tab:records} shows our CBF allowed the robots to reach the exits fastest in both environments, significantly outperforming Baseline 1. This illustrates that our approach offers greater flexibility in maintaining strong $r$-robustness.

\subsection{Hardware Experiment}
We illustrate practical application of our method using Crazyflie (CF) 2.0 platform. Our setup consisted of eight CFs: four normal leaders and four followers (one malicious). Initially arranged to form a strongly $4$-robust network with a communication range of $R=2.5$ m, the CFs were to maintain strong $3$-robustness throughout the experiments. We used the Crazyswarm 1.0 ROS Neotic package \cite{crazyswarm} along with two Crazyradio PAs for communication and $15$ Vicon motion capture cameras for localization. Control inputs were computed offboard by solving QP \eqref{eq:total_qp} on a computer and sent to each CF. To prevent trivial formations where drones vertically align or fly at high altitudes to avoid obstacles, we fixed their altitudes at $0.7$m. We tested our proposed CBF on hardware by replicating the simulation scenarios. In the first scenario, we set $p_{i,\text{goal}}$ in \eqref{eq:nominal_input} such that four leaders would spread out in different directions. The next two experiments involved environments similar to Fig.\ref{fig:env} with 1) a narrow space and 2) multiple obstacles of varying sizes. In all cases, the drones successfully maintained strong $3$-robustness, and normal followers successfully matched their LED colors to those of normal leaders. Due to space constraints, results and videos are available \href{https://github.com/5217232/Resilient-Leader-Follower-CBF-QP-Controller}{here}$^{\ref{note1}}$.

\section{Conclusion}
\label{sec:conclusion}
This paper presents a Control Barrier Function (CBF) that guarantees the maintenance of a multi-robot network's strong $r$-robustness above some threshold. We construct a valid CBF by composing multiple HOCBFs, each of which addresses an agent's sufficient connectivity for the network's robustness. Our approach directly considers robustness without predetermined topologies, thus offering greater flexibility in network formation. We have evaluated our method with simulations and hardware experiments. 

%Note that our approach requires robots to have full state knowledge. For future work, we aim to improve our method such that only local information is required.
\bibliographystyle{IEEEtran}

                                  % on the last page of the document manually. It shortens
                                  % the textheight of the last page by a suitable amount.
                                  % This command does not take effect until the next page
                                  % so it should come on the page before the last. Make
                                  % sure that you do not shorten the textheight too much.

%%%%%%%%%%%%%%%%%%%%%%%%%%%%%%%%%%%%%%%%%%%%%%%%%%%%%%%%%%%%%%%%%%%%%%%%%%%%%%%%

%%%%%%%%%%%%%%%%%%%%%%%%%%%%%%%%%%%%%%%%%%%%%%%%%%%%%%%%%%%%%%%%%%%%%%%%%%%%%%%%

%%%%%%%%%%%%%%%%%%%%%%%%%%%%%%%%%%%%%%%%%%%%%%%%%%%%%%%%%%%%%%%%%%%%%%%%%%%%%%%%

%%%%%%%%%%%%%%%%%%%%%%%%%%%%%%%%%%%%%%%%%%%%%%%%%%%%%%%%%%%%%%%%%%%%%%%%%%%%%%%%

\bibliographystyle{IEEEtran}
\bibliography{references_ll}

% Generated by IEEEtran.bst, version: 1.14 (2015/08/26)
\begin{thebibliography}{10}
\providecommand{\url}[1]{#1}
\csname url@samestyle\endcsname
\providecommand{\newblock}{\relax}
\providecommand{\bibinfo}[2]{#2}
\providecommand{\BIBentrySTDinterwordspacing}{\spaceskip=0pt\relax}
\providecommand{\BIBentryALTinterwordstretchfactor}{4}
\providecommand{\BIBentryALTinterwordspacing}{\spaceskip=\fontdimen2\font plus
\BIBentryALTinterwordstretchfactor\fontdimen3\font minus \fontdimen4\font\relax}
\providecommand{\BIBforeignlanguage}[2]{{%
\expandafter\ifx\csname l@#1\endcsname\relax
\typeout{** WARNING: IEEEtran.bst: No hyphenation pattern has been}%
\typeout{** loaded for the language `#1'. Using the pattern for}%
\typeout{** the default language instead.}%
\else
\language=\csname l@#1\endcsname
\fi
#2}}
\providecommand{\BIBdecl}{\relax}
\BIBdecl

\bibitem{zhu2019}
H.~Zhu, J.~Juhl, L.~Ferranti, and J.~Alonso-Mora, ``Distributed multi-robot formation splitting and merging in dynamic environments,'' in \emph{2019 International Conference on Robotics and Automation (ICRA)}, 2019, pp. 9080--9086.

\bibitem{Kaleb_mesch}
\BIBentryALTinterwordspacing
K.~B. Naveed, A.~Dang, R.~Kumar, and D.~Panagou, ``mesch: Multi-agent energy-aware scheduling for task persistence,'' 2024. [Online]. Available: \url{https://arxiv.org/abs/2406.04560}
\BIBentrySTDinterwordspacing

\bibitem{zhang2021}
P.~Zhang, H.~Xue, S.~Gao, and J.~Zhang, ``Distributed adaptive consensus tracking control for multi-agent system with communication constraints,'' \emph{IEEE Transactions on Parallel and Distributed Systems}, vol.~32, no.~6, pp. 1293--1306, 2021.

\bibitem{time_varying_strongly_usevitch20}
J.~Usevitch and D.~Panagou, ``Resilient leader-follower consensus to arbitrary reference values in time-varying graphs,'' \emph{IEEE Transactions on Automatic Control}, vol.~65, pp. 1755--1762, 2020.

\bibitem{ren2010}
W.~Ren, ``Consensus tracking under directed interaction topologies: algorithms and experiments,'' \emph{IEEE Transactions on Control Systems Technology}, vol.~18, pp. 230--237, 2010.

\bibitem{usevitch2022}
J.~Usevitch and D.~Panagou, ``Resilient trajectory propagation in multirobot networks,'' \emph{IEEE Transactions on Robotics}, vol.~38, no.~1, pp. 42--56, 2022.

\bibitem{dimar2008}
D.~V. Dimarogonas, P.~Tsiotras, and K.~J. Kyriakopoulos, ``Leader-follower cooperative attitude control of multiple rigid bodies,'' in \emph{2008 American Control Conference}, 2008, pp. 801--806.

\bibitem{rezaee21}
H.~Rezaee, T.~Parisini, and M.~M. Polycarpou, ``Resiliency in dynamic leader–follower multiagent systems,'' \emph{Automatica}, vol. 125, p. 109384, 2021.

\bibitem{LeBlanc13}
H.~J. LeBlanc, H.~Zhang, X.~Koutsoukos, and S.~Sundaram, ``Resilient asymptotic consensus in robust networks,'' \emph{IEEE Journal on Selected Areas in Communications}, vol.~31, no.~4, pp. 766--781, 2013.

\bibitem{saldana2017}
D.~Saldaña, A.~Prorok, S.~Sundaram, M.~F.~M. Campos, and V.~Kumar, ``Resilient consensus for time-varying networks of dynamic agents,'' in \emph{2017 American Control Conference (ACC)}, 2017, pp. 252--258.

\bibitem{abbas2018}
W.~Abbas, A.~Laszka, and X.~Koutsoukos, ``Improving network connectivity and robustness using trusted nodes with application to resilient consensus,'' \emph{IEEE Transactions on Control of Network Systems}, vol.~5, no.~4, pp. 2036--2048, 2018.

\bibitem{li2024}
J.~Li, X.~Dong, J.~Yu, Q.~Li, and Z.~Ren, ``Resilient time-varying formation tracking for heterogeneous high-order multiagent systems with multiple dynamic leaders,'' \emph{IEEE Transactions on Control of Network Systems}, vol.~11, no.~1, pp. 89--100, 2024.

\bibitem{mitra2019}
A.~Mitra and S.~Sundaram, ``Byzantine-resilient distributed observers for lti systems,'' \emph{Automatica}, vol. 108, p. 108487, 2019.

\bibitem{CDC2024}
H.~Lee and D.~Panagou, ``Construction of the sparsest maximally r-robust graphs,'' in \emph{2024 IEEE 63rd Conference on Decision and Control (CDC)}, 2024, pp. 170--177.

\bibitem{strongly_usevitch18}
J.~Usevitch and D.~Panagou, ``Resilient leader-follower consensus to arbitrary reference values,'' \emph{2018 Annual American Control Conference (ACC)}, 2018.

\bibitem{cavorsi2022}
M.~Cavorsi, B.~Capelli, and S.~Gil, ``Multi-robot adversarial resilience using control barrier functions,'' \emph{Robotics: Science and Systems}, 2022.

\bibitem{saulnier2017}
K.~Saulnier, D.~Saldaña, A.~Prorok, G.~J. Pappas, and V.~Kumar, ``Resilient flocking for mobile robot teams,'' \emph{IEEE Robotics and Automation Letters}, vol.~2, pp. 1039--1046, 2017.

\bibitem{guerrero2020}
L.~Guerrero-Bonilla and V.~Kumar, ``Realization of $r$ -robust formations in the plane using control barrier functions,'' \emph{IEEE Control Systems Letters}, vol.~4, no.~2, pp. 343--348, 2020.

\bibitem{pirani2023}
M.~Pirani, A.~Mitra, and S.~Sundaram, ``Graph-theoretic approaches for analyzing the resilience of distributed control systems: A tutorial and survey,'' \emph{Automatica}, vol. 157, p. 111264, 2023.

\bibitem{ames_cbf}
A.~Ames, S.~D. Coogan, M.~Egerstedt, G.~Notomista, K.~Sreenath, and P.~Tabuada, ``Control barrier functions: Theory and applications,'' \emph{2019 18th European Control Conference (ECC)}, pp. 3420--3431, 2019.

\bibitem{ames_cbf_qp}
A.~D. Ames, X.~Xu, J.~W. Grizzle, and P.~Tabuada, ``Control barrier function based quadratic programs for safety critical systems,'' \emph{IEEE Transactions on Automatic Control}, vol.~62, pp. 3861--3876, 2017.

\bibitem{garg2024}
\BIBentryALTinterwordspacing
K.~Garg, J.~Usevitch, J.~Breeden, M.~Black, D.~Agrawal, H.~Parwana, and D.~Panagou, ``Advances in the theory of control barrier functions: Addressing practical challenges in safe control synthesis for autonomous and robotic systems,'' \emph{Annual Reviews in Control}, vol.~57, p. 100945, 2024. [Online]. Available: \url{https://www.sciencedirect.com/science/article/pii/S1367578824000142}
\BIBentrySTDinterwordspacing

\bibitem{visibility_tk}
\BIBentryALTinterwordspacing
T.~Kim and D.~Panagou, ``Visibility-aware rrt* for safety-critical navigation of perception-limited robots in unknown environments,'' 2024. [Online]. Available: \url{https://arxiv.org/abs/2406.07728}
\BIBentrySTDinterwordspacing

\bibitem{dev_observer_cbf}
D.~R. Agrawal and D.~Panagou, ``Safe and robust observer-controller synthesis using control barrier functions,'' \emph{IEEE Control Systems Letters}, vol.~7, pp. 127--132, 2023.

\bibitem{xiao2022}
W.~Xiao and C.~Belta, ``High-order control barrier functions,'' \emph{IEEE Transactions on Automatic Control}, vol.~67, no.~7, pp. 3655--3662, 2022.

\bibitem{learning_Cbf_tk}
T.~Kim, R.~I. Kee, and D.~Panagou, ``Learning to refine input constrained control barrier functions via uncertainty-aware online parameter adaptation,'' in \emph{2025 IEEE International Conference on Robotics and Automation (ICRA)}, 2025.

\bibitem{dawson_cbf}
\BIBentryALTinterwordspacing
C.~Dawson, Z.~Qin, S.~Gao, and C.~Fan, ``Safe nonlinear control using robust neural lyapunov-barrier functions,'' in \emph{Proceedings of the 5th Conference on Robot Learning}, ser. Proceedings of Machine Learning Research, A.~Faust, D.~Hsu, and G.~Neumann, Eds., vol. 164.\hskip 1em plus 0.5em minus 0.4em\relax PMLR, 08--11 Nov 2022, pp. 1724--1735. [Online]. Available: \url{https://proceedings.mlr.press/v164/dawson22a.html}
\BIBentrySTDinterwordspacing

\bibitem{xiao2023}
W.~Xiao, T.-H. Wang, R.~Hasani, M.~Chahine, A.~Amini, X.~Li, and D.~Rus, ``Barriernet: Differentiable control barrier functions for learning of safe robot control,'' \emph{IEEE Transactions on Robotics}, vol.~39, no.~3, pp. 2289--2307, 2023.

\bibitem{ren2007}
\BIBentryALTinterwordspacing
W.~Ren and E.~Atkins, ``Distributed multi-vehicle coordinated control via local information exchange,'' \emph{International Journal of Robust and Nonlinear Control}, vol.~17, no. 10-11, pp. 1002--1033, 2007. [Online]. Available: \url{https://onlinelibrary.wiley.com/doi/abs/10.1002/rnc.1147}
\BIBentrySTDinterwordspacing

\bibitem{jason2012}
S.~Janson, T.~Łuczak, T.~S. Turova, and T.~Vallier, ``Bootstrap percolation on the random graph $g_{n,p}$,'' \emph{The Annals of Applied Probability}, vol.~22, 2012.

\bibitem{capelli2020}
B.~Capelli and L.~Sabattini, ``Connectivity maintenance: Global and optimized approach through control barrier functions,'' in \emph{2020 IEEE International Conference on Robotics and Automation (ICRA)}, 2020, pp. 5590--5596.

\bibitem{black2023}
M.~Black and D.~Panagou, ``Adaptation for validation of consolidated control barrier functions,'' in \emph{2023 62nd IEEE Conference on Decision and Control (CDC)}, 2023, pp. 751--757.

\bibitem{egerstedt2018}
\BIBentryALTinterwordspacing
M.~Egerstedt, J.~N. Pauli, G.~Notomista, and S.~Hutchinson, ``Robot ecology: Constraint-based control design for long duration autonomy,'' \emph{Annual Reviews in Control}, vol.~46, pp. 1--7, 2018. [Online]. Available: \url{https://www.sciencedirect.com/science/article/pii/S136757881830141X}
\BIBentrySTDinterwordspacing

\bibitem{glotfelter2017}
P.~Glotfelter, J.~Cortés, and M.~Egerstedt, ``Nonsmooth barrier functions with applications to multi-robot systems,'' \emph{IEEE Control Systems Letters}, vol.~1, no.~2, pp. 310--315, 2017.

\bibitem{breeden2023}
J.~Breeden and D.~Panagou, ``Compositions of multiple control barrier functions under input constraints,'' in \emph{2023 American Control Conference (ACC)}, 2023, pp. 3688--3695.

\bibitem{erdos1964}
P.~Erdös and A.~Rényi, ``On the strength of connectedness of a random graph,'' \emph{Acta Mathematica Academiae Scientiarum Hungaricae}, vol.~12, pp. 261--267, 1964.

\bibitem{bollobas2001}
B.~Bollobás, \emph{Random Graphs}, 2nd~ed., ser. Cambridge Studies in Advanced Mathematics.\hskip 1em plus 0.5em minus 0.4em\relax Cambridge University Press, 2001.

\bibitem{crazyswarm}
\BIBentryALTinterwordspacing
J.~A. Preiss*, W.~H\"onig*, G.~S. Sukhatme, and N.~Ayanian, ``Crazyswarm: {A} large nano-quadcopter swarm,'' in \emph{{IEEE} International Conference on Robotics and Automation ({ICRA})}.\hskip 1em plus 0.5em minus 0.4em\relax {IEEE}, 2017, pp. 3299--3304, software available at \url{https://github.com/USC-ACTLab/crazyswarm}. [Online]. Available: \url{https://doi.org/10.1109/ICRA.2017.7989376}
\BIBentrySTDinterwordspacing

\end{thebibliography}
\end{document}